\documentclass[letterpaper, 10 pt, conference]{ieeeconf}  
\IEEEoverridecommandlockouts                              
\usepackage{multicol}
\usepackage{mathtools} 
\usepackage{amssymb} 
\usepackage{fancyhdr}
\usepackage{dsfont}
\let\labelindent\relax

\usepackage{amsthm}
\usepackage{diagbox}
\usepackage{arydshln}
\usepackage{algorithm}
\usepackage{algpseudocode}
\usepackage{subfigure} 
\usepackage[T1]{fontenc}
\usepackage{array}
\usepackage{booktabs}
\usepackage{times}
\usepackage{graphicx}
\usepackage[font=footnot
esize]{caption}
\usepackage{wrapfig}
\usepackage{tikz}
\usepackage{enumitem}
\usepackage{amsmath}
\usepackage[dvipsnames]{xcolor}
\usetikzlibrary{arrows.meta,positioning,calc}
\usetikzlibrary{patterns,angles,quotes,arrows}
\usetikzlibrary{positioning}

\definecolor{myRed}{RGB}{235,81,73}
\definecolor{myOrange}{RGB}{241,146,62}
\definecolor{myGold}{RGB}{251,235,142}
\definecolor{myYellow}{RGB}{255,255,84}
\definecolor{myGreen}{RGB}{58 124 66}
\definecolor{myGray}{RGB}{64,64,64}

\newtheorem{mytheorem}{Theorem}
\newtheorem{mydefinition}{Definition}
\newtheorem{myassumption}{Assumption}

\newtheorem{lemma}{Lemma}

\newtheorem{corollary}{Corollary}

\newcommand\numberthis{\addtocounter{equation}{1}\tag{\theequation}}

\usepackage{microtype}

\makeatletter
\let\NAT@parse\undefined
\makeatother
\definecolor{mycitecolor}{RGB}{71, 191, 38}
\definecolor{mylinkcolor}{RGB}{40, 115, 201}
\usepackage[bookmarks=true, colorlinks, citecolor=mycitecolor,linkcolor=mylinkcolor,urlcolor=mycitecolor]{hyperref}  

\title{\LARGE \bf
Learning Control Policies to Provably Satisfy Hard Affine Constraints for Black-Box Hybrid Dynamical Systems
}

\author{Aayushi Shrivastava, Kartik Nagpal, Sairam Jinkala, Jean-Baptiste Bouvier, and Negar Mehr
\thanks{All authors are with the Department of Mechanical Engineering, University of California Berkeley, Berkeley, CA 94709, USA {\tt\small aayushis@berkeley.edu}}%
}

\begin{document}

\maketitle
\thispagestyle{empty}
\pagestyle{empty}

\begin{abstract}

Ensuring safety for black-box hybrid dynamical systems presents significant challenges due to their instantaneous state jumps and unknown explicit nonlinear dynamics. Existing solutions for strict safety constraint satisfaction, like control barrier functions (CBFs) and reachability analysis, rely on direct knowledge of the dynamics. Similarly, safe reinforcement learning (RL) approaches often rely on known system dynamics or merely discourage safety violations through reward shaping. In this work, we want to learn RL policies which provably satisfy affine state constraints in closed loop for black-box hybrid dynamical systems with affine reset maps. Our key insight is forcing the RL policy to be affine and repulsive near the constraint boundaries for the unknown nonlinear dynamics of the system, providing guarantees that the trajectories will not violate the constraint. We further account for constraint violation due to instantaneous state jumps that occur due to impacts or reset maps in the hybrid system by introducing a second repulsive affine region before the reset that prevents post-reset states from violating the constraint. We derive sufficient conditions under which these policies satisfy safety constraints in closed loop. We also compare our approach with state-of-the-art reward shaping and learned-CBF methods on hybrid dynamical systems like the constrained pendulum and paddle juggler environments. In both scenarios, we show that our methodology learns higher quality policies while always satisfying the safety constraints.

\end{abstract}


\color{black}

\section{INTRODUCTION}\label{introduction}
Hybrid dynamical systems are fundamental to modeling real-world systems that exhibit both continuous and discrete-event dynamics, with applications spanning robotics, aeronautics, and logistics \cite{Lygeros2008HybridSystems, ricardo2009hybrid}. The nonlinear and discrete-event nature of such systems makes their control particularly challenging. Reinforcement Learning (RL) has shown promise in handling such dynamics, which makes it a natural candidate for learning control policies in hybrid systems where analytical controller design is often intractable.

However, the traditional RL paradigm assumes no explicit knowledge of the underlying dynamics and relies instead on access to a simulator. Consequently, it is challenging to formally guarantee safety for such systems. In most safe RL formulations, safety is encoded through state constraints that prevent the system from entering unsafe regions. Many state-of-the-art algorithms incorporate penalty terms in the reward to discourage violations \cite{CPO, PPO-Barrier}, yet these methods lack theoretical assurances that constraints will be satisfied during deployment. Moreover, the instantaneous state changes caused by reset maps in hybrid systems can drive the system into unsafe regions within a single timestep, which makes constraint satisfaction particularly difficult to ensure under unknown dynamics.

In this paper, we seek to learn a control policy which is guaranteed to satisfy an affine state constraint for a black-box hybrid dynamical system. We are inspired by POLICEd-RL \cite{POLICEd_RL, CDC_POLICEd_RL}, which guarantees affine constraint satisfaction for smooth nonlinear black-box systems. In POLICEd-RL, the policy is forced to be affine and repulsive within a buffer region near the constraint boundary, ensuring that controlled trajectories never cross it. However, hybrid systems also include discrete state jumps, defined by reset maps, which can lead to constraint violations even if the pre-jump state was satisfying our constraint. To ensure constraint satisfaction, despite these instantaneous jumps, our approach introduces a secondary repulsive affine buffer which prevents the system from entering unsafe pre-jump states. We assume that the reset map is affine and known. During execution, this dual affine region design provides affine constraint satisfaction guarantees for our trained policies, both for continuous dynamics and across discontinuities, without requiring any system knowledge. 

To achieve this behavior, we need to learn a single RL policy that simultaneously exhibits affine and nonlinear behavior dependent on the system's state, which is challenging and often leads to unstable training. To address this, we propose a switched actor architecture in which our policy network alternates between two actor types based on the system state. Near constraint boundaries, an affine actor enforces safety by inducing repulsive behavior, whereas elsewhere a multi-layer perceptron (MLP) actor handles the complex nonlinear dynamics of the hybrid system. This design allows our method to combine provable constraint satisfaction with the flexibility and expressivity of deep reinforcement learning. We compare our approach to a learned control barrier function~\cite{PPO-Barrier} and a soft constraint policy optimization baseline~\cite{CPO}, showing that our method strictly enforces safety while achieving competitive performance.

In summary, our contributions in this work are as follows:
\begin{enumerate}[itemsep=0pt, topsep=2pt, parsep=0pt, partopsep=0pt]
    \item We propose an RL framework that guarantees satisfaction of affine state constraints in black-box hybrid dynamical systems with affine reset maps. 
    \item We establish sufficient conditions that ensure a learned policy satisfies affine safety constraints under hybrid dynamics with affine reset maps.
    \item We introduce a switched actor network architecture that employs distinct actor types in different regions of the state space: an affine actor near constraint boundaries to ensure safety, and an MLP actor elsewhere for expressivity.
    \item We demonstrate our framework on numerical simulations of a constrained pendulum and a one-dimensional paddle juggler, showing that our method strictly enforces safety while achieving competitive performance relative to baseline RL methods.
\end{enumerate}

The remainder of this paper is organized as follows: In Section~\ref{related_works}, we review related works. In Section~\ref{problem_formulation}, we introduce our problem formulation. In Section~\ref{hybrid_policed_rl}, we provide details on our proposed approach and present theoretical guarantees. In Section~\ref{numerical_simulations}, we report our numerical results and compare with two baselines. Finally, we conclude the paper and discuss future directions in Section~\ref{conclusion}.

\section{RELATED WORKS} \label{related_works}
\subsection{Safety for Hybrid Systems}
A dominant paradigm in safety-critical control is the use of Control Barrier Functions (CBFs) \cite{Ames2019CBFReview}, which are Lyapunov-like functions that guarantee forward invariance of safe sets. In our case, these safe sets are the parts of the state space that satisfy the given safety constraints. Extensions to hybrid systems include computing local CBFs for each continuous mode \cite{yang2024safe}, learning CBFs from data \cite{yang2024learninglocalcontrolbarrier}, and combining CBFs with Model Predictive Control (MPC) \cite{Agrawal2017DiscreteCB,hybrid_mpc}. Computing a global CBF has also been proposed in \cite{Prajna2004BarrierCertificates}, which provides safety guarantees across discrete transitions. However, these approaches assume precise knowledge or an accurate estimate of system dynamics, an assumption that may not hold in general in RL or many real-world systems.

Another line of work is reachability analysis, which computes the set of states reachable from initial conditions that satisfy safety constraints \cite{Alur1994TimedAutomata, Alur1991HybridAutomata}. Hamilton-Jacobi (HJ) methods have been used for nonlinear uncertain systems \cite{Bansal2017HJIOverview} and hybrid systems \cite{Borquez2023HamiltonJacobiRA}, and several works combine HJ reachability with RL \cite{Fisac2019HJRL, bansal2021deepreach, chilakamarri2024reachability} via value or Hamiltonian function approximations. However, like CBF-based approaches, these methods require knowledge of the system dynamics. Our work addresses this by developing a model-free approach to guarantee constraint satisfaction in black-box hybrid systems, assuming access to a simulator and known reset maps.

\subsection{Constraints in Reinforcement Learning}
RL has become a prevalent approach for learning optimal control policies, particularly because it does not require explicit knowledge of system dynamics. Yet a primary limitation of model-free RL is the lack of safety guarantees, essential for deployment in safety-critical settings. The field of safe RL has produced numerous approaches with various models for safety \cite{DulacArnold2021Challenges, Brunke2022SafeLearning}. A common formulation is the Constrained MDP (CMDP) \cite{Altman2021ConstrainedMDP}, with variants including state-wise constrained MDPs \cite{Zhao2023StatewiseCPO}, Constrained Policy Optimization (CPO) \cite{Achiam2017CPO}, and state-wise constrained policy optimization \cite{Zhao2023LearnWithImagination}. These methods typically penalize constraint violations through reward adjustments but do not provide formal guarantees of constraint satisfaction during deployment \cite{Gu2022SafeRLReview}.

Another approach to incorporating safety into RL leverages learned CBFs \cite{Ames2019CBFReview}. For example, ConBaT \cite{Meng2023ConBaTCB} trains a control barrier transformer to avoid unsafe actions, and several works propose learning safety certificates directly from data \cite{Qin2022Sablas, Ma2022JointSynthesis, PPO-Barrier, sampling_RL, guassian_CBF}. Recent work \cite{pmlr-v283-mestres25a} utilizes safe gradient flows to guarantee that policy updates satisfy safety constraints at every training iteration and deployment. However, many of these data-driven approaches aim for probabilistic safety certificates in stochastic environments, whereas we focus on hard-constraint guarantees in deterministic settings.

MPC has also been used to impose safety constraints in RL \cite{Hewing2020LBMPC}, by predicting safe actions in a receding-horizon manner by leveraging dynamics models to anticipate future behavior. For black-box systems, this requires learning either robust \cite{Aswani2013SafeMPC, DiPalma2004MultiModelMPC} or stochastic \cite{Lorenzen2017StochasticMPC} dynamics models. MPC has also been used as a safety filter on learned policies \cite{Wabersich2021SafetyFilter, Hewing2020LBMPC}. However, solving a high-dimensional optimization at every timestep limits scalability, whereas closed-loop RL policies operate at much lower computational cost.

\section{PROBLEM FORMULATION}\label{problem_formulation}
In this section, we formally define what we mean by safety for a black-box hybrid dynamical system. We model our hybrid system via the framework of hybrid automata \cite{Lygeros2008HybridSystems}, and enforce safety by formulating a constraint satisfaction problem that prevents trajectories from entering unsafe regions. We consider a hybrid dynamical system of the form
\begin{equation}\label{eq:hybrid_automata}
    H = \big((Q \times S), U, f, \rho_0, T, G, R\big),
\end{equation}
where $Q:= \{q_1,q_2,...,q_N\}$ is the finite set of discrete modes of the system, $s \in S \subset \mathbb{R}^n$ is the continuous state of the system, and $u \in U \subset \mathbb{R}^m$ is the continuous control input. Each discrete mode $q_i \in Q$ has deterministic continuous dynamics $f_{i}: Q \times S \times U \rightarrow \mathbb{R}^n$. In this paper, our dynamics are implicitly black-box, meaning we do not have access to the analytical form $\dot{s} = f_{i}\left (s(t),u(t) \right )$, but we can evaluate $f_i$. This aligns with common RL setups, where we can sample a simulator or a physical system that encodes $f_{i}$.
\tikzset{
  mode/.style       ={circle,draw,thick,minimum size=22mm},
  smallmode/.style  ={circle,draw,thick,minimum size=8mm},
   mode_qi/.style    ={circle,draw,thick,minimum size=22mm, fill=blue!20, fill opacity=0.3, text opacity=1},
  mode_qj/.style    ={circle,draw,thick,minimum size=22mm, fill=purple!20, fill opacity=0.3, text opacity=1},
  lab/.style        ={font=\large},
  slab/.style       ={font=\small},
  ctrl/.style       ={-{Straight Barb[length=3pt]}, very thick}, 
}
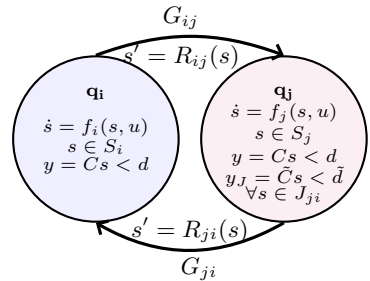
\begin{figure}
    \centering
            \begin{tikzpicture}[scale=0.5]
                \scriptsize
                \coordinate (C) at (0,5);
                \node[mode_qi]  (Qi) at (C) {};
                \node at ($(C)+(0, 1.2)$) {$\mathbf{q_i}$};
                \node at ($(C)+(0, 0.30)$) {$\dot{s}=f_i(s,u)$};
                \node at ($(C)+(0,-0.20)$) {$s\in S_i$};
                \node at ($(C)+(0,-0.7)$) {$y=Cs < d$};
                
                \coordinate (R1) at ( 5.0,  5.0);
                
                \node[mode_qj] (Qj) at (R1) {};
                \node at ($(R1)+(0, 1.2)$) {$\mathbf{q_j}$};
                \node at ($(R1)+(0, 0.70)$) {$\dot{s}=f_j(s,u)$};
                \node at ($(R1)+(0, 0.10)$) {$s\in S_j$};
                \node at ($(R1)+(0, -0.5)$) {$y=Cs < d$};
                \node at ($(R1)+(0,-1.0)$) {$y_J=\Tilde{C}s < \Tilde{d}$};
                \node at ($(R1)+(0,-1.5)$) {$\forall s \in J_{ji}$};
                
                \draw[ctrl] (Qi.north) to[bend left=20] 
                node[pos=0.45,below,slab] {$s'=R_{ij}(s)$} 
                   node[pos=0.45,above,slab] {$G_{ij}$} (Qj.north); 
                
                \draw[ctrl] (Qj.south) to[bend left=30]
                  node[midway,above,slab] {$s'=R_{ji}(s)$}  
                  node[pos=0.45,below,slab] {$G_{ji}$}(Qi.south);
        
            \end{tikzpicture}
        
    \caption{
    Hybrid automaton representation of a two-mode hybrid system with continuous dynamics $f_i$ and $f_j$ within discrete modes $q_i$ and $q_j$ respectively,  transition between mode $q_j$ to $q_i$ is given by guard condition $G_{ji}$ and reset map $R_{ji}$. The constraint $y=Cs \leq d$ is active in both discrete modes whereas the resulting jump constraint $y_J = \Tilde{C}s \leq \Tilde{d}$ is active in mode $q_j$. 
    }
    \label{fig:hybrid_automata}
\end{figure}

We denote $\rho_0$ as the distribution of initial states. We denote the state space of mode $q_i \in Q$ as $S_i\subset S$. Whenever our deterministic dynamics results in a next state $s'$ which exits the valid operation domain for mode $q_i$, the system takes a forced transition into another discrete mode $q_j$. We capture discrete transitions between modes by the relation $T \subseteq Q \times S \times Q \times S$, where $((q_i,s),(q_j,s')) \in T$ indicates a valid jump from state $(q_i,s)$ to $(q_j,s')$. The guard set $G(q_i,q_j) = \{s \in S : ((q_i,s),(q_j,s')) \in T\}$ denotes the states in which transitions occur from mode $q_i$ to $q_j$, as shown in Fig.~\ref{fig:hybrid_automata}. The reset (or impact) map $R(s)$ defines the resulting state after a transition, restricted to the guard set $G(q_i,q_j)$, meaning
\begin{equation}\label{eq:reset_map}
    R(s):= \{s' \in S : ((q_i,s),(q_j,s')) \in T \}.
\end{equation}
We simplify our notation for $G(q_i,q_j)$ as $G_{ij}$ and choose to denote $R(s)$ as $R_{ij}(s)$ for transitions from mode $q_i$ to $q_j$. We assume that the reset map is affine, a common characteristic in many hybrid systems of interest, such as air-traffic control~\cite{HybridTomlin}, aircraft autopilot modes~\cite{Lygeros2008HybridSystems}, and bipedal locomotion~\cite{grant_alip}. 

\begin{myassumption} For all transitions between discrete modes of the hybrid system, the reset map is affine. Specifically, for a transition between $q_j$ to $q_i$, we assume that the reset map is given by
    \begin{equation} \label{eq:affine_reset}
        s' = R_{ji}(s) = M_{ji} s + p_{ji}, 
    \end{equation}
where $M_{ji} \in \mathbb{R}^{n \times n}$ and $p_{ji} \in \mathbb{R}^n$ are known.
\end{myassumption}
Note that we assume prior knowledge of the guard condition $G_{ji}$ and reset map $R_{ji}(s)$. Be aware that this assumption is not overly restrictive, as in practice we can estimate $R_{ji}(s)$ or $R_{ij}(s)$ and $G_{ji}$ or $G_{ji}$ from data samples, especially by running the system with any control policy and finding the states that show a discrete jump in the trajectory. Since resets typically correspond to physically interpretable events (e.g., a contact or a mode switch), they are often easy to detect and characterize from data.

We define safety as satisfying a constraint on the system's output, irrespective of the mode of operation. As such, we can define the safe set as the subset of states where the output constraint holds, and the unsafe set will be its complement. This constraint definition implicitly defines a boundary between safe and unsafe states, analogous to control barrier functions. Therefore, if a system trajectory never violates the constraint, then all states in the trajectory are within the safe set. 

In our framework, we focus on affine constraints. Such constraints often arise in robotics, such as center-of-mass stabilization in locomotion tasks via velocity control, joint limits, and workspace boundaries \cite{grant_alip}, and in autonomous driving, through linearized inter-vehicle separation \cite{Lygeros2008HybridSystems}. 

To begin, we consider a single affine constraint of relative degree 1 on the system output. The relative degree of a constraint is the smallest number of times the constraint function must be differentiated with respect to time before the control input can appear in the resulting expression~\cite{Brreden2021HighRDCBF}. Therefore, the higher the relative degree, the more inertia the constraint has and the more challenging it is to satisfy \cite{Brreden2021HighRDCBF}.
\begin{myassumption} 
    We assume that the system safety constraint is captured by a single affine inequality constraint on the system output $y(t) \in \mathbb{R}$ of the form
    \begin{equation}\label{eq: constraint}
        y(t) := C s(t) \leq d  \quad  \forall \ t \geq 0, 
    \end{equation}
    where $C \in \mathbb{R}^{1 \times n}$ and $d \in \mathbb{R}$ are known.
\end{myassumption}
To control the hybrid system, we consider a deterministic feedback policy $u(t) = \pi_\theta\big(s(t) \big) \in U$ and model it using a deep neural network parameterized by $\theta$. Our objective is to train a policy $\pi_\theta$ such that the closed-loop system satisfies constraint~\eqref{eq: constraint} while maximizing the expected reward 
\begin{equation}\label{eq: expected reward}
    \underset{\theta}{\max}\, g(\pi_\theta) := \hspace{-2mm} \underset{s_0 \sim \rho_0}{\mathbb{E}} \hspace{-1mm} \int_0^{\infty} \hspace{-3mm} \gamma^t r\big( s(t), \pi_\theta(s(t))\big) dt \hspace{5mm} \text{s.t.}\ \eqref{eq: constraint}. \hspace{2mm}
\end{equation}
where $\gamma \in (0,1]$ is the discount factor, $r$ is the reward function, and $\rho_0$ the distribution of initial states. The only stochasticity in our setting comes from the initial state sampling $s_0\sim\rho_0$. We consider an infinite-horizon objective, as is standard in the RL formulation we build upon. The goal is to learn a stationary policy that performs well over an indefinitely long horizon. In summary, we want to learn a deterministic RL policy that, once trained, satisfies the affine constraint~\eqref{eq: constraint} at all times, while maximizing the expected reward for a black-box hybrid dynamical system~\eqref{eq:hybrid_automata} with a known affine reset map~\eqref{eq:affine_reset}.

\section{OUR FRAMEWORK} \label{hybrid_policed_rl}
In this section, we present a novel framework for solving our formulated problem. First, we discuss how we can ensure safety for each mode of the hybrid system, and then discuss how we can ensure the overall safety of the hybrid system by accounting for potential state transitions and jumps. Finally, we describe our reinforcement learning pipeline to learn such safe policies.


\tikzset{
  mode/.style       ={circle,draw,thick,minimum size=22mm},
  smallmode/.style  ={circle,draw,thick,minimum size=8mm},
   mode_qi/.style    ={circle,draw,thick,minimum size=22mm, fill=blue!20, fill opacity=0.3, text opacity=1},
  mode_qj/.style    ={circle,draw,thick,minimum size=22mm, fill=purple!20, fill opacity=0.3, text opacity=1},
  lab/.style        ={font=\large},
  slab/.style       ={font=\small},
  ctrl/.style       ={-{Straight Barb[length=3pt]}, very thick}, 
}
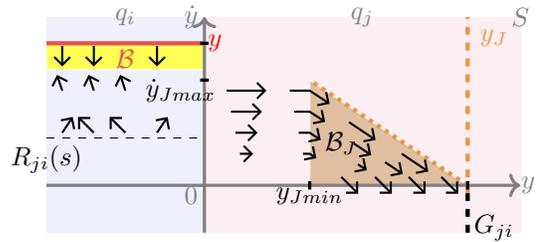
\begin{figure}
    \centering
            \begin{tikzpicture}[scale=0.7]
                \def\ymin{2}    
                \def\ymax{5}    
                \def\ydotmax{2} 
                \def\ydotmaxxxx{2.7}
                \def\tick{0.08} 
                \def\width{6} 
                \def\height{\ydotmax + 1.2} 
                \def\arrowLen{0.6}
                
                \fill[blue!20, opacity=0.3] (-\width/2, -0.45*\ydotmax) rectangle (0, \height);
                \fill[purple!20, opacity=0.3] (0, -0.45*\ydotmax) rectangle (\width, \height);
        
                \filldraw[fill=brown!50, draw=white] (\ymin, 0) -- (\ymax, 0) -- (\ymin, \ydotmax) -- (\ymin, 0);
                \draw[myOrange, ultra thick, dotted] (\ymax, 0) -- (\ymin, \ydotmax);
                \node at (1.3*\ymin, 0.4*\ydotmax) {\textcolor{black}{$\mathcal{B}_J$}};
        
                \filldraw[fill=myYellow, draw=white] (-\width/2, \ydotmaxxxx) -- (-\width/2,  \ydotmaxxxx-0.5) -- (0, \ydotmaxxxx-0.5) -- ( 0, \ydotmaxxxx);
                \node at (-\width/4, 2.4) {\textcolor{myRed}{$\mathcal{B}$}};
        
                \node at (\width, \height) {\textcolor{gray}{$S$}};
                \node at (-\width/4, \height) {\textcolor{gray}{$q_i$}};
                \node at (\width/2, \height) {\textcolor{gray}{$q_j$}};
                \draw[very thick,gray, ->] (-\width/2, 0) -- (\width, 0);
                \node at (\width+0.15, 0) {\textcolor{gray}{$y$}};
                \draw[very thick, gray, ->] (0, -0.45*\ydotmax) -- (0, \height);
                \node at (-0.25, \height) {\textcolor{gray}{$\dot y$}};
                \node at (-0.25, -0.2) {\textcolor{gray}{$0$}};
                \draw[ultra thick, dashed, black] (\ymax, -0.45*\ydotmax) -- (\ymax, 0);
                \draw[ultra thick, dashed, myOrange] (\ymax, -0.15) -- (\ymax, \height);
        
                \draw[ultra thick, solid, myRed] (-\width/2, \ydotmaxxxx) -- (0, \ydotmaxxxx);
        
                \draw[thin, dashed, black] (-\width/2, \ydotmaxxxx/3) -- (0, \ydotmaxxxx/3);

                \draw[very thick, black] (-\tick, \ydotmax) -- (\tick, \ydotmax);
                \node at (-0.45, \ydotmax- 0.2) {$\dot y_{Jmax}$};
                \draw[very thick, black] (\ymin, -\tick) -- (\ymin, \tick);
                \node at (\ymin, -0.21) {$y_{Jmin}$};
                \draw[very thick, black] (\ymax, -\tick) -- (\ymax, \tick);
                \node at (\ymax+0.5, \height-0.4) {\textcolor{myOrange}{$y_{J}$}};
                \node at (\ymax+0.5, -0.4*\ydotmax) {\textcolor{black}{$G_{ji}$}};
                \draw[very thick, black] (-\tick, \ydotmaxxxx) -- (\tick, \ydotmaxxxx);
                \node at (0.2, \ydotmaxxxx) {\textcolor{red}{$y$}};
                \node at (-\width/2, \ydotmaxxxx/3-0.4) {\textcolor{black}{$R_{ji}(s)$}};
                
                \foreach \x in {-0.3*\width/2, -0.9*\width/2}
                    \def\len{0.7*\arrowLen * \y}
                    \draw[thick,-{Straight Barb[length=3pt]}] (\x, \ydotmaxxxx/3+0.1) -- (\x +0.2,\ydotmaxxxx/3 +0.4);
                \foreach \x in {-0.5*\width/2,-0.7*\width/2}
                    \draw[thick, -{Straight Barb[length=3pt]}] (\x, \ydotmaxxxx/3+0.1) -- (\x - 0.3,\ydotmaxxxx/3 +0.4);
        
                \foreach \x in { -0.5*\width/2,-0.7*\width/2, -0.9*\width/2}
                    \def\len{0.7*\arrowLen * \y}
                    \draw[thick, -{Straight Barb[length=3pt]}] (\x, \ydotmaxxxx/3+0.9) -- (\x - 0.1,\ydotmaxxxx/3 +1.2);
        
                \foreach \x in {-0.3*\width/2,-0.7*\width/2,-0.9*\width/2}
                    \def\len{0.7*\arrowLen * \y}
                    \draw[thick, -{Straight Barb[length=3pt]}] (\x, \ydotmaxxxx -0.05) -- (\x,\ydotmaxxxx-0.44);
        
                \foreach \y in {0.3*\ydotmax, 0.5*\ydotmax, 0.7*\ydotmax, 0.9*\ydotmax}
                    \def\len{0.7*\arrowLen * \y}
                    \draw[thick, -{Straight Barb[length=3pt]}] (0.4*\ymin -0.5*\len, \y) -- (0.4*\ymin + 0.5*\len, \y);
        
                \foreach \y in {0.3*\ydotmax, 0.5*\ydotmax, 0.7*\ydotmax, 0.9*\ydotmax}
                {
                    \def\len{0.7*\arrowLen * \y}
                    \draw[thick, -{Straight Barb[length=3pt]}] (\ymin-0.5*\len, \y) -- (\ymin, \y) -- (\ymin + 0.5*\len, \y - 0.33*\len);
                }
                    
                \foreach \y in {0.2*\ydotmax, 0.4*\ydotmax, 0.6*\ydotmax}
                    \def\len{0.7*\arrowLen * \y}
                    \draw[thick, -{Straight Barb[length=3pt]}] (1.5*\ymin - 0.5*\len, \y) -- (1.5*\ymin + 0.5*\len, \y - 0.67*\len);
                \def\y{0.4*\ydotmax}
                \def\len{0.7*\arrowLen * \y}
                \draw[thick, -{Straight Barb[length=3pt]}] (1.8*\ymin - 0.5*\len, \y-0.18*\ydotmax) -- (1.8*\ymin + 0.5*\len, \y-0.18*\ydotmax - 0.67*\len);
                \def\y{0.6*\ydotmax}
                \def\len{0.7*\arrowLen * \y}
                \draw[thick, -{Straight Barb[length=3pt]}] (2*\ymin - 0.5*\len, \y-0.3*\ydotmax) -- (2*\ymin + 0.5*\len, \y-0.3*\ydotmax - 0.67*\len);
                
                \foreach \x in {1.3*\ymin, 1.62*\ymin, 1.94*\ymin, 2.25*\ymin}
                    \draw[thick, -{Straight Barb[length=3pt]}] (\x, 0.08*\ydotmax) .. controls (\x+0.08*\ymin, 0) .. (\x+0.16*\ymin, -0.08*\ydotmax);
        
        
                    
            \end{tikzpicture}

    \caption{
    Phase portrait of the affine safety constraint $y$ illustrating our framework. The affine repulsive buffer $\mathcal{B}$ (yellow) ensures that trajectories in mode $q_i$ approaching the constraint $y$ (\textcolor{myRed}{red}) cannot exit the buffer. For states in mode $q_j$ that reach the guard condition $G_{ji}$, the reset map $R_{ji}(s)$ may however lead to the violation of $y$ in mode $q_i$. To prevent this, the buffer $\mathcal{B}_J$ (brown) provides a repulsive region that dissipates inertia and pushes trajectories away from the affine constraint $y_J$ (\textcolor{myOrange}{orange}). Together, the two buffers guarantee constraint satisfaction for both continuous dynamics and discrete state transitions.
    }
    \label{fig:hybrid_policed_rl_sch}
\end{figure}

\subsection{Safety in a discrete mode}
Intuitively, when a system approaches a constraint, if the policy steers the system trajectory away from this constraint boundary, we maintain safety. As a result, if we define a buffer region preceding the safety constraint and design the policy to behave repulsively within this buffer, we will prevent any trajectories from crossing the buffer, and therefore, from violating the safety constraint. If we choose an affine policy for the buffer region, we can easily characterize the repulsive nature and provide safety guarantees.

To start, we will ensure that trajectories that remain within one mode $q_i$ of the system do not violate a constraint. To ensure constraint satisfaction within a single continuous mode of a hybrid system without any discrete transitions, we build on the previous work \cite{POLICEd_RL} where the authors forced the output of a deep neural network to be affine in a buffer near the constraint to provably satisfy the constraint. Inspired by their work, we define a repulsive buffer for constraint~\eqref{eq: constraint}. Given a `buffer width' $w>0$, we can define a buffer
\begin{equation}\label{eq: buffer}
     \mathcal{B} := \{s \in S_i : Cs \in [d-w,d]\} ,
\end{equation}
which prescribes a convex polytope, as proved in \cite{POLICEd_RL}. If well designed, all trajectories close to the constraint must enter this buffer $\mathcal{B}$, and if the buffer is sufficiently repulsive, the trajectories will never reach and violate the constraint boundary. We provide an illustrative example in Fig.~\ref{fig:hybrid_policed_rl_sch}. We choose our buffer width $w$ such that no trajectory can traverse the entire buffer within a single timestep. And while the dynamics of a given mode $f_i$ may be nonlinear, we can utilize an affine approximation of the dynamics to derive a sufficient repulsiveness condition for our buffer $\mathcal{B}$ that guarantees safety. To begin, we define the following parameter $\varepsilon_i$ which measures how far the true system dynamics is from being linear:
\begin{mydefinition}
    An approximation measure $\varepsilon_i$ of dynamics $f_i$ with respect to constraint \eqref{eq: constraint} and buffer \eqref{eq: buffer} is any $\varepsilon_i \geq 0$ for which there exist any matrices $A \in \mathbb{R}^{n \times n}$, $B \in \mathbb{R}^{n \times m}$ and $c \in \mathbb{R}^n$ such that
    \begin{equation} \label{eq: epsilon}
      |Cf_{i}(s,u) - C(As + Bu + c)| \leq \varepsilon_i,\quad  \forall s \in \mathcal{B}, \forall u \in U. 
    \end{equation}
\end{mydefinition}

Note that we require only the knowledge of $\varepsilon_i$ and not $A$, $B$, and $c$ to guarantee the satisfaction of the constraint \ref{eq: constraint} as shown below. Importantly, we only ensure that the affine approximation remains locally close to the true dynamics $f_i$. We can use this affine approximation measure $\varepsilon_i$ to derive conditions to make the policy repulsive inside the buffer $\mathcal{B}$. We denote the set of vertices of $\mathcal{B}$ as $\mathcal{V}(\mathcal{B})$. Note that since our buffer $\mathcal{B}$ is a convex polytope, if it is sufficiently repulsive at the vertices $v \in \mathcal{V}(\mathcal{B})$, it is also sufficiently repulsive for all the states within the buffer. The following theorem from \cite{POLICEd_RL} establishes sufficient conditions for buffer $\mathcal{B}$ to be repulsive for nonlinear dynamics $f_i$ within mode $q_i$.
\begin{mytheorem} \label{thm:theorem_policed}
    If for some approximation measure $\varepsilon_i$, repulsion condition
    \begin{equation} \label{eq: repulsive}
        Cf_{i}(v,\pi_{\theta}(v)) \leq -2\varepsilon_i,
    \end{equation}
    holds for all $v \in \mathcal{V}(\mathcal{B})$, then, the trajectories will never violate the constraint 
    \eqref{eq: constraint}.
\end{mytheorem}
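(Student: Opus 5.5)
The plan is to show that $\dot y = Cf_i(s,\pi_\theta(s)) \le 0$ for every state $s$ in the buffer $\mathcal{B}$. A trajectory starting in the safe set can then never cross the boundary $Cs=d$, because to reach it the trajectory has to pass through $\mathcal{B}$, and inside $\mathcal{B}$ the output $y$ is non-increasing. The policy is affine on $\mathcal{B}$: $\pi_\theta(s)=Ds+e$ for $s\in\mathcal{B}$, which is the POLICEd construction assumed in this framework. The only dynamical information available is the approximation measure $\varepsilon_i$, so the argument transfers the vertex condition \eqref{eq: repulsive} to the whole polytope through the affine surrogate $A,B,c$ of \eqref{eq: epsilon}.

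\textbf{Step 1 (vertices to the affine surrogate).} Fix the $A,B,c$ associated with $\varepsilon_i$. Define $h(s) := C\big(As + B\pi_\theta(s) + c\big)$. Because $\pi_\theta$ is affine on $\mathcal{B}$, $h$ is an affine function of $s$ on $\mathcal{B}$. At each vertex $v\in\mathcal{V}(\mathcal{B})$, \eqref{eq: epsilon} and \eqref{eq: repulsive} give
\[
h(v) \le Cf_i(v,\pi_\theta(v)) + \varepsilon_i \le -\varepsilon_i .
\]

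\textbf{Step 2 (convexity).} Any $s\in\mathcal{B}$ is a convex combination $s=\sum_k \lambda_k v_k$ of the vertices, since $\mathcal{B}$ is a convex polytope. An affine function preserves convex combinations, so $h(s)=\sum_k\lambda_k h(v_k)\le -\varepsilon_i$.

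\textbf{Step 3 (back to the true dynamics).} Applying \eqref{eq: epsilon} once more at $s$, with $u=\pi_\theta(s)\in U$, gives
\[
\dot y(s) = Cf_i(s,\pi_\theta(s)) \le h(s)+\varepsilon_i \le 0
\]
for all $s\in\mathcal{B}$. This is why the margin in \eqref{eq: repulsive} is $2\varepsilon_i$: one $\varepsilon_i$ is lost going from the true dynamics to the surrogate at the vertices, and another going back at $s$.

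\textbf{Step 4 (invariance argument).} Suppose by contradiction that a trajectory with $y(0)\le d$ has $y(t_1)>d$ for some $t_1$. By continuity of $y$ within the mode, there is a last time $t_0<t_1$ with $y(t_0)= d-w$, or with $y(t_0)=y(0)$ if the trajectory starts inside the buffer. On $[t_0,t_1]$ the state stays in $\{Cs\ge d-w\}$ and reaches $y>d$. Up to the first exit through $Cs=d$, the state lies in $\mathcal{B}$. There $\dot y\le0$, so $y(t)\le y(t_0)\le d$, and hence $y$ cannot increase up to $d$ and beyond, a contradiction. Making this exact needs care when $\dot y$ equals $0$ exactly at the boundary. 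I would handle it with the mean value theorem, or by noting that $y\le d$ is preserved whenever $\dot y\le 0$ on the set $\{y=d\}\cap\mathcal{B}$ and on a neighborhood inside the buffer (a Nagumo-type argument). In discrete time the same role is played by the width choice "no trajectory traverses $\mathcal{B}$ in one timestep", which forces any crossing to land a sample inside $\mathcal{B}$.

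I expect the main obstacle to be Step 4: the rigorous invariance argument at the boundary, given that $\dot y\le 0$ rather than strictly negative. The per-point bound in Steps 1–3 is routine. I would first try the contradiction with a first-hitting-time and monotonicity argument, using $\sup$ over times where $y\le d$, and fall back on Nagumo's theorem if needed.
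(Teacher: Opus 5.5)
Your proposal is correct and follows the paper's proof step for step. The paper uses the same vertex bound via the triangle inequality, the same convex-combination step using that $\pi_\theta$ is affine on $\mathcal{B}$, the same return to the true dynamics giving $C\dot s\le 0$ on $\mathcal{B}$, and the same last-entry-into-$\mathcal{B}$ plus mean-value/monotonicity contradiction.

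The paper avoids the boundary subtlety you flag in Step~4 by taking $Cs_0<d$ strictly: its safe set $S_s$ is defined with a strict inequality. Then $y(t_0)<d$, and at the first exit time $\tau$ through $Cs=d$ your monotonicity step already gives $d=y(\tau)\le y(t_0)<d$.

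The case $y(0)=d$ is not covered by this argument, and you should drop it rather than try to rescue it. The mean value theorem gives nothing there. A Nagumo-type argument would need uniqueness of solutions, which neither the black-box dynamics nor the switched policy provides, since that policy is possibly discontinuous at $Cs=d$.
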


\begin{proof}
The intuition behind this proof is to use~\eqref{eq: repulsive} and approximation~\eqref{eq: epsilon} to show that $C\dot{s} \leq 0$ for all $s \in \mathcal{B}$,
which in turn prevents the trajectory from crossing buffer $\mathcal{B}$ and hence from violating the constraint~\eqref{eq: constraint}. 

Note that since $\varepsilon_i$ is an approximation measure, there exist matrices $A$, $B$ and $c$ satisfying~\eqref{eq: epsilon}, i.e. 
 \begin{equation} 
 \begin{aligned} \label{eq: cs_epsilon}
 C(Av + B\pi_{\theta}(v) + c)   & \leq |C(Av + B\pi_{\theta}(v) + c) \\ 
 & - Cf_i(v,\pi_{\theta}(v))|   + Cf_i(v,\pi_{\theta}(v)) \\
 & \leq \varepsilon_i + C\dot{v} \leq \varepsilon_i - 2\varepsilon_i \leq -\varepsilon_i
 \end{aligned}
 \end{equation}
where we first use the triangular inequality, then the affine approximation \eqref{eq: epsilon}. Since our buffer $\mathcal{B}$ is a convex hull with $N$ vertices $v \in \mathcal{V}(\mathcal{B})$, we can write any state within $\mathcal{B}$ as $s = \sum\limits_{k=1}^{N}\alpha_kv^k$ where $\sum\limits_{k=1}^{N}\alpha_k = 1$. Note that the policy is affine in $\mathcal{B}$, i.e., $\pi_\theta(s) = D_\theta s + e_\theta$, where $D_\theta \in \mathbb{R}^{m \times n}$ and $e_\theta \in \mathbb{R}^{m}$ are the weights and biases of the policy, respectively. Hence, we have:
 \begin{equation} 
 \begin{aligned} \label{eq: convex_hull_B}
 C(As + B\pi_{\theta}(s) + c)   &  = C(As + B(D_{\theta}s + e_{\theta}) + c)\\
&= C(A+BD_{\theta})\sum\alpha_kv^k \\
&   +C(Be_{\theta} + c)\sum\alpha_k \\
& = \sum\alpha_kC[(Av^k + B\pi_{\theta}(v)+ c] \\
& \leq \sum\alpha_k(-\varepsilon_i) \leq -\varepsilon_i
 \end{aligned}
 \end{equation}
 where the inequality comes from~\eqref{eq: cs_epsilon} applied on each vertex $v^k$, and $\sum$ denotes $\sum\limits_{k=1}^{N}$. Then, for any state $s \in \mathcal{B}$, we have
 \begin{equation} \label{eq: Cs_dot_neg}
\begin{aligned}
C\dot{s} &= Cf_i(s,\pi_{\theta}(s)) \\
& \leq |Cf_i(s,\pi_{\theta}(s)) - C(As + B\pi_{\theta}(s) + c)|\\
&\quad + C(As + B\pi_{\theta}(s) + c) \\
& \leq \varepsilon_i - \varepsilon_i \leq 0
\end{aligned}
\end{equation}
where we first use the triangular inequality, then~\eqref{eq: epsilon} and~\eqref{eq: convex_hull_B}. We define the safe states as $S_s := \{s \in S_i : Cs < d\}$. We only consider trajectories remaining in state space $S_i$, which we define as $\mathcal{\tau}^S(s_0, u(\cdot)) := \{s(t), s(t) \in S_i$ and follows $f_i\}$ for all $s_0 \in S_i$ and adequate actions $u \in U$. Having proved \eqref{eq: Cs_dot_neg}, we will show that it prevents all trajectories $\mathcal{\tau}^S(s_0,\pi_{\theta})$ from exiting the safe set $S_s$ when $s_0 \in S_s$.

Now, suppose there exists $s_0 \in S_s$ whose trajectory $\tau^S(s_0, \pi_\theta)$ leaves $S_s$. Since the trajectory remains in $S_i$, there exists $T > 0$ with $s(T) \in S_i \setminus S_s$, implying $y(T) = Cs(T) > d$. Meanwhile, $s_0 \in S_s$ gives $y(0) = Cs_0 < d$. By continuity of $y$, the intermediate value theorem yields $t_2 \in (0, T]$ with $y(t_2) = d$. Let $t_0 \geq 0$ mark the final entry of the trajectory into $\mathcal{B}$, so that $s(t) \in \mathcal{B}$ for all $t \in [t_0, t_2]$.
Since $y(t) = Cs(t)$ is continuously differentiable within a mode, the mean value theorem gives $t_1 \in (t_0, t_2)$ with $\dot{y}(t_1) = (y(t_2) - y(t_0))/(t_2 - t_0)$, as both numerator and denominator are strictly positive. However, $s(t_1) \in \mathcal{B}$ and~\eqref{eq: Cs_dot_neg} require $\dot{y}(t_1) \leq 0$, a contradiction. Hence, every trajectory originating in $S_s$ remains in $S_s$.
\end{proof}

The above proof is reproduced from \cite{POLICEd_RL} for completeness. Theorem \ref{thm:theorem_policed} guarantees that trajectories in $q_i$ steered by a policy $u(t) = \pi_{\theta}(s(t))$ that incorporates this buffer~\eqref{eq: buffer} satisfies constraint \eqref{eq: constraint} as long as repulsion condition \eqref{eq: repulsive} is satisfied and the system trajectories remain within the same discrete mode $q_i$ as shown in Fig. \ref{fig:hybrid_policed_rl_sch}.

\subsection{Overall Safety of the hybrid system}
Although Theorem~\ref{thm:theorem_policed} can guarantee safety for every individual discrete mode of a hybrid system $q_i$, it does not correctly account for the discrete event transitions. When the system undergoes a transition from mode $q_j$ to $q_i$ at guard condition $G_{ji}$, the reset map $R_{ji}(s)$ could cause the state to jump to a post-reset state which violates the constraint $y$, but has never entered the buffer $\mathcal{B}$. This can even be true for states that were safe in $q_j$ before the reset. To address this challenge, we define a constraint-violating guard set $J_{ji}$ that captures the guard states that lead to constraint violation post-reset.
\begin{mydefinition}
    For a given $G_{ji}$ and $R_{ji}(s)$, we define a constraint-violating guard set $J \subseteq G$ with $J_{ji}$ = $\{s \in S : ((q_j,s),(q_i,s')) \in T; \ CR_{ji}(s) > d \}$, i.e. the guard states which post-reset violate the constraint $Cs(t) > d$.
\end{mydefinition}
Note that to guarantee constraint satisfaction, we need to ensure that our policy will avoid the set $J_{ji}$. Since our reset map is affine, we can define a new affine constraint $y_J(t)$ which is active near the guard set $J_{ji}$ and is defined as
\begin{equation}\label{eq: constraint_J}
    y_J(t) := \Tilde{C} s(t) \leq \Tilde{d} \quad \forall \ t \geq 0, \quad \forall s(t) \in J_{ji}, 
\end{equation}
where $\Tilde{C} = CM_{ji}$, and $\Tilde{d} = d - Cp_{ji}$, $M_{ji}$ and $p_{ji}$ denote the affine approximation of $R_{ji}(s)$ from~\eqref{eq:affine_reset}.

Note that the constraint $y_J(t)$ is also affine. Let $\varepsilon_j$ be the measure of the affine approximation of continuous dynamics $f_j$ of discrete mode $q_j$ in an affine repulsive buffer $\mathcal{B}_J$ near the constraint $y_J(t)$. 
If we can guarantee that the states before impact do not reach $J_{ji}$ within mode $q_j$, then the constraint cannot be violated with state jumps. Therefore, we can design an affine repulsive buffer $\mathcal{B}_J$ before $J_{ji}$ similar to $\mathcal{B}$. We can directly use Theorem \ref{thm:theorem_policed} with the approximation measure $\varepsilon_j$ in  $\mathcal{B}_J$ and guarantee constraint satisfaction. 
\begin{lemma}
Assume that for some approximation measure $\varepsilon_j$ of the dynamics $f_j$, and relative degree $1$, $\Tilde{C}f_{j}(v,\pi_{\theta}(v)) \leq -2\varepsilon_j$ holds for all $v \in \mathcal{V}(\mathcal{B}_J)$, then the trajectories will never violate the constraint \eqref{eq: constraint_J} and never reach jump set $J_{ji}$.
\end{lemma}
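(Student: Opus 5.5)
The plan is to reduce the lemma to Theorem~\ref{thm:theorem_policed}, applied in mode $q_j$ with the constraint $C$, $d$, $f_i$, $\varepsilon_i$ replaced by $\Tilde{C}$, $\Tilde{d}$, $f_j$, $\varepsilon_j$. Then we translate the resulting invariance of $\{\Tilde{C}s < \Tilde{d}\}$ into avoidance of $J_{ji}$.

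The first step is to define the jump buffer
\[
\mathcal{B}_J := \{ s \in S_j : \Tilde{C}s \in [\Tilde{d}-w_J, \Tilde{d}] \},
\]
with width $w_J>0$. As in \eqref{eq: buffer}, this is a convex polytope, since it is an intersection of two half-spaces with $S_j$, and we take $S_j$ to be polytopic as in \cite{POLICEd_RL}. We also assume the policy is affine on $\mathcal{B}_J$, i.e.\ $\pi_\theta(s)=D_\theta s+e_\theta$ there. This is what the switched-actor architecture enforces.

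Next, we repeat the three inequalities \eqref{eq: cs_epsilon}--\eqref{eq: Cs_dot_neg} with $\Tilde{C}$ in place of $C$. Because $\varepsilon_j$ is an approximation measure, there are $A,B,c$ with $|\Tilde{C}f_j(s,u)-\Tilde{C}(As+Bu+c)|\le\varepsilon_j$ on $\mathcal{B}_J$. Here we read the definition with $\Tilde{C}$, since the approximation measure for $\mathcal{B}_J$ is taken with respect to constraint \eqref{eq: constraint_J}. The hypothesis at the vertices gives $\Tilde{C}(Av+B\pi_\theta(v)+c)\le-\varepsilon_j$. Affinity of the policy and the convex-combination argument extend this bound to all of $\mathcal{B}_J$. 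The triangle inequality then yields $\Tilde{C}\dot s\le 0$ on $\mathcal{B}_J$.

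The relative-degree-1 assumption is what makes this meaningful: the input enters $\dot y_J$ directly, so the vertex condition can in fact be met by the policy. With that, the final-entry / mean-value argument of Theorem~\ref{thm:theorem_policed} applies verbatim to $y_J=\Tilde{C}s$ along any trajectory that stays in mode $q_j$. Any trajectory starting in $\{\Tilde{C}s<\Tilde{d}\}$ therefore keeps $y_J<\Tilde{d}$ up to the moment it hits a guard.

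The final step is to conclude that $J_{ji}$ is never reached. For $s\in J_{ji}$ we have
\[
CR_{ji}(s)=CM_{ji}s+Cp_{ji}=\Tilde{C}s+Cp_{ji}>d,
\]
which is exactly $\Tilde{C}s>\Tilde{d}$. Hence $J_{ji}\subseteq\{\Tilde{C}s>\Tilde{d}\}$. Since the pre-reset state satisfies $\Tilde{C}s<\Tilde{d}$, it is not in $J_{ji}$, and the post-reset state satisfies $Cs'<d$.

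The main obstacle I expect is handling the guard time itself. The trajectory in $q_j$ ends exactly when it reaches $G_{ji}$, so continuity of $y_J$ has to hold up to and including that instant. I would deal with this by working on the closed interval up to the impact time: the pre-reset state is the left limit of the continuous flow, and the strict inequality is preserved by the contradiction argument, because reaching $\Tilde{d}$ would require $\dot y_J>0$ somewhere inside $\mathcal{B}_J$. A secondary subtlety is that constraint \eqref{eq: constraint_J} is only required near $J_{ji}$. I would handle this by noting that the argument only uses the buffer, which is placed adjacent to $J_{ji}$, together with the initial condition $\Tilde{C}s_0<\Tilde{d}$ on entering mode $q_j$.
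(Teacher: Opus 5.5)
Your proposal is correct and follows the paper's route: the paper justifies this lemma only by saying that Theorem~\ref{thm:theorem_policed} applies directly in mode $q_j$ with $\Tilde{C}$, $\Tilde{d}$, $f_j$, $\varepsilon_j$ and the buffer $\mathcal{B}_J$, which is exactly your reduction. Your explicit check that $CR_{ji}(s)>d \iff \Tilde{C}s>\Tilde{d}$, which gives $J_{ji}\subseteq\{\Tilde{C}s>\Tilde{d}\}$, and your handling of the impact instant and of the initial condition on entering $q_j$, spell out details the paper leaves implicit in its choice $\Tilde{C}=CM_{ji}$, $\Tilde{d}=d-Cp_{ji}$.
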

However, note that depending on the reset map $R_{ji}(s)$ and the guard condition $G_{ji}$, the relative degree of constraint $y_J(t)$ can be higher than 1. For such a constraint $y_J(t)$, we need to dissipate the inertia of the states progressing towards the constraint~\eqref{eq: constraint_J} before the states can be repulsed from the constraint, as shown in Fig.~\ref{fig:hybrid_policed_rl_sch}. This desired behavior can be achieved by clever buffer design. 
For simplicity, let the safety constraint $y_J$ be of maximum relative degree two. To maintain constraint \eqref{eq: constraint_J} in the buffer $\mathcal{B}_J$, we need that  $\dot{y}_J \leq 0$ when the trajectory reaches the constraint boundary $y_J = \Tilde{d}$ (i.e., our controller must generate a velocity away from the constraint boundary). Unlike the relative degree one case, the buffer $\mathcal{B}_J$ must first dissipate the inertia of incoming trajectories before they reach $y_J = \tilde{d}$.

Let $\dot{y}_{Jmax} > 0$ be the maximal rate of constraint $y_J$ that can be dissipated within $\mathcal{B}_J$. We design $\mathcal{B}_J$ as a convex polytope such that it dissipates the inertia of trajectories arriving at some $y_{Jmin}$ with velocities $\dot {y}_J \leq \dot{y}_{Jmax}$ where $y_{Jmin}$ is the minimum value of constraint $y_J$ in $\mathcal{B}_J$. Consequently, we choose the vertices $\mathcal{V}(\mathcal{B}_J)$ of the buffer $\mathcal{B}_J$ such that they satisfy $y_J \in [y_{Jmin}, \Tilde{d}]$ and $\dot y_J \in [\dot{y}_{Jmin}, \dot{y}_{Jmax}]$ shown in Fig.~\ref{fig:hybrid_policed_rl_sch} in mode $q_j$. 

With only implicit access to our dynamics, we over-approximate the vertices of our buffer $\dot{y}_{Jmin}, \dot{y}_{Jmax}, y_{Jmin}$ and ensure our affine buffer policies are sufficiently repulsive. For more details on buffer design, we refer the reader to \cite{CDC_POLICEd_RL}. We now state the dissipation condition for the policy to be repulsive in a buffer $\mathcal{B}_J$ as follows:
\begin{mytheorem} \label{thm:theorem_reldeg2}
    Assume that for some approximation measure $\varepsilon_j$ of the $2^{nd}$ derivative of dynamics $\ddot{f}_j$, and relative degree $2$, the following dissipation condition
    \begin{equation} \label{eq: repulsive_reldeg2}
        \Tilde{C}\ddot{f}_{j}(v,\pi_{\theta}(v)) \leq -2\varepsilon_j - \beta \dot{v},
    \end{equation} 
    holds for all $v \in \mathcal{V}(\mathcal{B}_J)$, where $\dot{v}$ is the $1^{st}$ derivative of the vertex $v$, and $\beta$ is defined as     
    \begin{equation}
        \beta = \frac{\dot{y}_{Jmax}}{\Tilde{d}-y_{Jmin}},
    \end{equation}
    If the trajectory $s$ steered by $u(t) = \pi_{\theta}(s(t))$ enters the buffer $\mathcal{B}_J$, then it will exit the buffer $\mathcal{B}_J$ without violating the constraint \eqref{eq: constraint_J}.
\end{mytheorem}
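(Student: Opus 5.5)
We will follow the structure of the proof of Theorem~\ref{thm:theorem_policed}, now working on the second derivative of $y_J = \Tilde{C}s$ and exploiting convexity of $\mathcal{B}_J$. Since $y_J$ has relative degree $2$, $\ddot{y}_J = \Tilde{C}\ddot{f}_j(s,\pi_\theta(s))$ is the first derivative in which the input appears. The approximation measure $\varepsilon_j$ is therefore taken with respect to $\Tilde{C}\ddot{f}_j$: there exist $A,B,c$ with $|\Tilde{C}\ddot f_j(s,u) - \Tilde{C}(As+Bu+c)| \leq \varepsilon_j$ on $\mathcal{B}_J$.

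The first step transfers the vertex condition \eqref{eq: repulsive_reldeg2} to the affine surrogate. We treat $\dot v$ as the scalar rate $\dot y_J$ at the vertex $v$, which is a coordinate of the vertex in the $(y_J,\dot y_J)$ description of $\mathcal{B}_J$. The triangle inequality then gives $\Tilde{C}(Av+B\pi_\theta(v)+c) + \beta\dot y_J(v) \leq -\varepsilon_j$ at every vertex. The map $s\mapsto \Tilde{C}(As+B\pi_\theta(s)+c)+\beta\,\Tilde{C}\dot s$ is affine on $\mathcal{B}_J$, because the policy is affine there ($\pi_\theta = D_\theta s+e_\theta$) and $\dot y_J$ is a linear coordinate of the buffer. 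Writing any $s\in\mathcal{B}_J$ as a convex combination of vertices, exactly as in \eqref{eq: convex_hull_B}, extends the inequality to all of $\mathcal{B}_J$. A second use of the approximation measure then yields
\begin{equation*}
\ddot y_J(t) \leq -\beta\,\dot y_J(t) \qquad \text{whenever } s(t)\in\mathcal{B}_J .
\end{equation*}

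The second step integrates this differential inequality. Define the quantity $\phi(t) = \dot y_J(t) + \beta\,(y_J(t) - y_{Jmin})$. Then $\dot\phi = \ddot y_J + \beta\dot y_J \leq 0$ while the trajectory is in $\mathcal{B}_J$, so $\phi$ is nonincreasing. At entry through the face $y_J=y_{Jmin}$ with $\dot y_J \leq \dot y_{Jmax}$, we have $\phi \leq \dot y_{Jmax} = \beta(\Tilde{d}-y_{Jmin})$; this is precisely why $\beta$ is defined as in the statement. We check that entry through other faces also satisfies this bound, since vertices lie under the dissipation line, as in Fig.~\ref{fig:hybrid_policed_rl_sch}. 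Hence, inside the buffer, $\dot y_J \leq \beta(\Tilde{d} - y_J)$, so $\dot y_J \leq 0$ at $y_J=\Tilde d$.

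The third step is the contradiction argument of Theorem~\ref{thm:theorem_policed}. Suppose $y_J$ exceeds $\Tilde d$ at some time, and let $t_2$ be the first such crossing. Just before $t_2$ the trajectory is in $\mathcal{B}_J$ with $y_J<\Tilde d$, so $\dot y_J(t_2)\leq 0$. To exclude a tangential crossing we use the strict margin $\varepsilon_j>0$ or a comparison argument: $y_J$ obeys $\dot y_J \leq \beta(\Tilde d - y_J)$, so $\Tilde d - y_J$ cannot reach a negative value by Grönwall. Therefore the trajectory either stays in $\mathcal{B}_J$ forever or leaves through a face other than $y_J=\Tilde d$, and never reaches $J_{ji}$. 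This is the claimed exit without violation of \eqref{eq: constraint_J}.

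The main obstacle is the first step. It requires making rigorous the meaning of $\dot v$ and of "approximation measure of $\ddot f_j$", so that $\beta\dot y_J$ really is affine on the buffer polytope and the vertex inequality extends by convexity. We will handle this by working in coordinates in which $(y_J,\dot y_J)$ are linear functions of $s$, which holds for relative degree two. A secondary subtlety is controlling entry through faces other than $y_J=y_{Jmin}$; this will be dealt with by the buffer construction from \cite{CDC_POLICEd_RL}.
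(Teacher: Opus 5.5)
Your proposal is correct and follows the paper's proof. You move the vertex condition to the affine surrogate by the triangle inequality, extend it to all of $\mathcal{B}_J$ using convexity and the affine policy to get $\ddot y_J \le -\beta\dot y_J$, and then integrate from the entry time; your monotone quantity $\dot y_J+\beta(y_J-y_{Jmin})$ is a cleaner form of the paper's exponential bound and shows explicitly where $\beta=\dot y_{Jmax}/(\Tilde{d}-y_{Jmin})$ and the triangular shape of $\mathcal{B}_J$ are used.

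Two small corrections. First, $\dot y_J$ being affine in $s$ does not follow from relative degree two; it is an extra structural assumption, the same one the paper makes implicitly when it replaces $\sum_k\alpha_k\dot v^k$ by $\dot s$ and builds $\mathcal{B}_J$ with $s_1=y_J$, $s_2=\dot y_J$. Second, no strict $\varepsilon_j$-margin is left after the two approximation steps, so drop that option and use the comparison argument: on the time interval spent in $\mathcal{B}_J$ it gives $\Tilde{d}-y_J(t)\ge(\Tilde{d}-y_J(t_0))e^{-\beta(t-t_0)}>0$, so the face $y_J=\Tilde{d}$ is never even reached.
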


More intuitively, Theorem~\eqref{thm:theorem_reldeg2} guarantees that if the dissipation condition~\eqref{eq: repulsive_reldeg2} is satisfied, trajectories entering buffer $\mathcal{B}_J$ can never exit through our constraint $y_J$. Specifically, $\beta$ signifies the maximum inertia which the buffer $\mathcal{B}_J$ can dissipate. 
This is illustrated in Fig.~\ref{fig:hybrid_policed_rl_sch}, where adhering to Theorem~\eqref{thm:theorem_reldeg2} produces the bent arrows in the flow field in mode $q_j$ which prevent trajectories from violating constraint~\eqref{eq: constraint_J}.
\begin{proof}
    We provide a brief sketch of the proof for Theorem~\ref{thm:theorem_reldeg2} as follows. Since our buffer $\mathcal{B}_J$ is a convex polytope and our policy $\pi_\theta$ is affine, we can write the control action produced by our policy as a convex combination of the behavior at the vertices $v \in \mathcal{V}(\mathcal{B}_J)$. As such, it is sufficient to verify our dissipation condition~\eqref{eq: repulsive_reldeg2} only at the vertices, yielding the condition $\Tilde{C}(As + B\pi_{\theta}(s) + c) \leq |\Tilde{C}(As + B\pi_{\theta}(s) + c) -\Tilde{C}\ddot{f}_j(s,\pi_{\theta}(s))|  + \Tilde{C}\ddot{f}_j(s,\pi_{\theta}(s)) \leq -\varepsilon_j - \beta \dot{s}$ for all $s \in \mathcal{B}_J$. 
    
    And so, for any interior state $s \in \mathcal{B}_J$, we have $\ddot{y}_J = \Tilde{C}\ddot{f}_j(s,\pi_{\theta}(s)) \leq |\Tilde{C}\ddot{f}_j(s,\pi_{\theta}(s)) - \Tilde{C}(As + B\pi_{\theta}(s) + c)| + \Tilde{C}(As + B\pi_{\theta}(s) + c) \leq - \beta \dot{y}_J$. From the two bounds of our inequality, we produce the key differential inequality $\ddot{y}_J \leq -\beta \dot{y}_J$. This means that within our buffer $\mathcal{B}_J$, the acceleration produced by our policy will always suitably oppose any velocity toward our constraint $y_J$, ensuring $\dot{y}_J \leq 0$ at the boundary. The trajectory, therefore, cannot exit $\mathcal{B}_J$ through the constraint side, and provably prevents it from ever entering the unsafe guard set $J_{ji}$. 
\end{proof}
As shown in \cite{CDC_POLICEd_RL}, Theorem~\ref{thm:theorem_reldeg2} can be naturally extended to relative degrees higher than two by including the approximation measure of $\varepsilon_j$ for the $r^{th}$ derivative of $f_j$ in the buffer condition. And since affine buffers $\mathcal{B}$ and $\mathcal{B}_J$ are repulsive, we can guarantee that the trajectories will never violate constraint \eqref{eq: constraint}, even in the presence of discrete jumps, and the overall safety of the hybrid system can be guaranteed.

\begin{corollary} \label{thm:coro_overall_safety}
    If the control policy $\pi_\theta$ is designed such that the resulting closed-loop trajectories satisfy the conditions of Theorem~\ref{thm:theorem_policed} and Theorem~\ref{thm:theorem_reldeg2} for all states $s\in S$ across all discrete modes $q \in Q$, then the resulting hybrid system never violates safety constraint~\eqref{eq: constraint}.
\end{corollary}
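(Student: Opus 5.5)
The proof will be an induction over the sequence of continuous flow intervals (hybrid time) that make up a closed-loop hybrid trajectory. Consider an execution starting from $s_0\sim\rho_0$ with $Cs_0<d$. We write it as a sequence of flow intervals $[t_k,t_{k+1}]$ in modes $q^{(k)}$, separated by resets at the jump times $t_{k+1}$. We will prove by induction on $k$ the invariant: $Cs(t)\le d$ for all $t\in[t_k,t_{k+1}]$ and $Cs(t_k^+)\le d$. Because the policy is deterministic and every mode's flow is governed by the same black-box $f_i$, the argument decomposes into a continuous-flow step and a jump step. We assume no Zeno behaviour, so that the intervals exhaust $[0,\infty)$; if needed we state this as a standing well-posedness assumption.

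\emph{Flow step.} Suppose the state at the start of interval $k$ satisfies the constraint. The trajectory stays in $S_{q^{(k)}}$ until the guard is hit, and repulsion condition~\eqref{eq: repulsive} holds at the vertices of $\mathcal{B}$ in that mode. Theorem~\ref{thm:theorem_policed} therefore applies verbatim on $[t_k,t_{k+1})$ and yields $Cs(t)\le d$ there. By continuity of $y$, this extends to the left limit $s(t_{k+1}^-)$.

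\emph{Jump step.} Let the pre-reset state be $s^-=s(t_{k+1}^-)\in G_{ji}$. There are two cases.
\begin{enumerate}
\item If $s^-\notin J_{ji}$, then by the definition of $J_{ji}$ we have $CR_{ji}(s^-)\le d$, and the post-reset state is safe.
\item The case $s^-\in J_{ji}$ must be excluded. Since $\tilde C=CM_{ji}$ and $\tilde d=d-Cp_{ji}$, we have $s\in J_{ji}$ iff $\tilde C s>\tilde d$ on the guard. Hence reaching $J_{ji}$ requires violating~\eqref{eq: constraint_J} during the flow in mode $q_j$. Any flow approaching $y_J=\tilde d$ must first pass through $\mathcal{B}_J$, either with $\dot y_J\le\dot y_{Jmax}$ or starting already inside it. Theorem~\ref{thm:theorem_reldeg2} (or the relative-degree-one Lemma) then guarantees that the flow exits $\mathcal{B}_J$ without crossing $y_J=\tilde d$. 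So $\tilde C s^-\le\tilde d$, which is a contradiction.
\end{enumerate}
In both cases $Cs(t_{k+1}^+)\le d$, which closes the induction. Combining the two steps over all $k$ gives $Cs(t)\le d$ for all hybrid times, across all modes.

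The main obstacle is the jump step. There we must ensure that every flow reaching the region $\tilde C s>\tilde d$ actually passes through $\mathcal{B}_J$ in the admissible velocity range, and that the flow-step invariant is not broken by a reset landing inside $\mathcal{B}$ or $\mathcal{B}_J$ with large inertia. I would handle this by appealing to the buffer design assumptions: the over-approximated vertices $y_{Jmin}$, $\dot y_{Jmax}$, and the width $w$ chosen so that no trajectory skips a buffer. I would then note that Theorem~\ref{thm:theorem_policed} only needs the post-reset state to satisfy the constraint, not to lie outside $\mathcal{B}$. Boundary equality cases, where a post-reset state has $Cs=d$ exactly, are handled by the non-strict inequality $C\dot s\le 0$ in~\eqref{eq: Cs_dot_neg}.
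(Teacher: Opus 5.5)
\textbf{Same approach, but the induction does not close as written.} The paper gives no separate proof of this corollary. It rests on the remark just before it: Theorem~\ref{thm:theorem_policed} keeps each flow safe, and Theorem~\ref{thm:theorem_reldeg2} (or the relative-degree-one lemma) keeps pre-reset states out of $J_{ji}$. Your induction over hybrid time, with non-Zeno behaviour excluded explicitly, is a more careful version of exactly that argument.

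The problem is that your invariant records only $Cs\le d$, while the jump step needs information about $y_J=\tilde C s$ at the start of each flow segment in $q_j$. The sentence ``reaching $J_{ji}$ requires violating~\eqref{eq: constraint_J} during the flow in mode $q_j$'' fails in two situations:
\begin{itemize}
\item a reset into $q_j$ (or the initial state) already places the state at $\tilde C s>\tilde d$, in particular directly in $J_{ji}$;
\item in the relative-degree-two case, a reset places the state at $y_J\in[y_{Jmin},\tilde d]$ with $\dot y_J>\beta(\tilde d-y_J)$, i.e.\ with more inertia than $\mathcal{B}_J$ can dissipate.
\end{itemize}
Such a trajectory need not enter $\mathcal{B}_J$ at all, and Theorem~\ref{thm:theorem_reldeg2} only speaks about trajectories that do. It can therefore reach $J_{ji}$ while $Cs\le d$ holds throughout, and the next reset violates~\eqref{eq: constraint}, even though the repulsion and dissipation conditions hold at every vertex. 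You flagged this yourself (``a reset landing inside $\mathcal{B}$ or $\mathcal{B}_J$ with large inertia''), but your resolution only covers $\mathcal{B}$. The design quantities $y_{Jmin}$, $\dot y_{Jmax}$ and $w$ constrain flows, not where resets land.

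\textbf{Repair.} Add to the invariant that every flow segment in a mode $q_j$ carrying a jump constraint starts on the safe side of $\mathcal{B}_J$. This means any flow from there into $J_{ji}$ must first enter $\mathcal{B}_J$ through its admissible part. Then assume explicitly that $\rho_0$ and every reset into such a $q_j$ land there; with this, your case~2 goes through. The paper leaves this hypothesis implicit too. It does hold in the pendulum example, where it reduces to $\tilde C s\le\tilde d$ and is preserved by the $q_2\to q_1$ reset because $\tilde C R_{21}(s)=Cs$ and $\tilde d=d$.

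\textbf{Equality case.} Separately, your treatment of the case $CR_{ji}(s^-)=d$ is not justified. The mean-value argument in Theorem~\ref{thm:theorem_policed} needs $y(t_0)<d$. With the switched actor, the closed-loop vector field is in general discontinuous at $Cs=d$, so $\dot y\le 0$ on $\mathcal{B}$ alone does not rule out an immediate exit from a state with $Cs=d$. Defining $J_{ji}$ with $CR_{ji}(s)\ge d$ makes every post-reset state strictly safe. Theorem~\ref{thm:theorem_policed} then applies exactly as proved, and the whole invariant can be kept strict.
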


Note that for simplicity, we presented our methodology for a single constraint and a single constraint-violating jump set. Theorem \ref{thm:theorem_policed} and Theorem \ref{thm:theorem_reldeg2} can be easily extended to multiple constraints by designing multiple buffers $\mathcal{B}$ and $\mathcal{B}_J$. Having established the theoretical conditions under which an RL policy can guarantee constraint satisfaction, we now describe how to implement this framework in practice.
\subsection{Training Pipeline}
The authors of \cite{POLICEd_RL, CDC_POLICEd_RL} used POLICE Deep Neural Networks (P-DNN) to make the policy affine in the buffer region $\mathcal{B}$. P-DNNs utilize deep network spline theory to force the network outputs to be affine within a region. For hybrid systems, where we require at least two affine regions $\mathcal{B}$ and $\mathcal{B}_J$, P-DNNs often fail, as having multiple affine regions greatly reduces model expressivity. To overcome this limitation, we employ a switched actor network architecture comprised of three actor networks: an affine actor for buffer $\mathcal{B}$, an affine actor for buffer $\mathcal{B}_J$, and a nonlinear actor (MLP) for the remaining state space. All actors share a common critic network, which ensures consistent value estimation across regions. Our affine actors are single-layer neural networks without any activation function, ensuring affine outputs. We can also easily expand by adding additional buffer networks. For a given state $s$, our final policy $\pi_{\theta}(s)$ selects the output from the appropriate actor.

This architecture ensures that the policy adheres to the affine constraints in both buffer regions while maintaining flexibility in the rest of the state space. The affine actors are trained to satisfy the repulsion conditions \eqref{eq: repulsive} or \eqref{eq: repulsive_reldeg2}, while the nonlinear actor focuses on maximizing the task objective in the unconstrained regions. Training proceeds in two stages. In the first stage, we train a base policy using a standard RL algorithm with a single nonlinear actor-critic pair, optimizing solely for the task objective without any safety considerations. In the second stage, we construct our switched-actor architecture and use a frozen copy of our prior policy as our nonlinear network to preserve learned task performance. The affine actors are trained by repeatedly resetting to states within their respective buffer regions. Training continues until the repulsion conditions are satisfied at all vertices of the buffer regions.

Our buffer actors are single-layer networks without activation functions, and so our policy within the buffer regions is affine by construction. To facilitate training, we add a reward penalty for constraint violation, and gradient updates cause the affine actors to learn to satisfy the repulsion conditions specified in Theorem~\ref{thm:theorem_policed} and \ref{thm:theorem_reldeg2}. Once these conditions are met, our learned policy has formal constraint satisfaction guarantees for hybrid systems with affine reset maps. In the next section, we demonstrate how our two-stage training process effectively balances task performance and safety.

\begin{figure}[!htb]
\centering
\begin{tikzpicture}[
    scale=0.85,
    box/.style={rectangle, draw, fill=gray!20, minimum height=0.7cm, minimum width=2.0cm, font=\small},
    input/.style={rectangle, draw, fill=blue!15, minimum height=0.7cm, minimum width=1.3cm, font=\small},
    output/.style={rectangle, draw, fill=green!20, minimum height=0.7cm, minimum width=2.2cm, font=\small},
    circ/.style={circle, draw, fill=white, minimum size=0.9cm, inner sep=0pt},
    out_circ/.style={circle, draw, fill=orange!25, minimum size=0.9cm, inner sep=0pt, font=\small},
    arrow/.style={->,>=latex,thick}
]

    
    \node[input] (state) at (0, 8) {$s$};
    
    \node[circ] (select) at (-1.5, 6) {select};
    
    \node[output] (critic) at (1.5, 6) {Value};
    \node[out_circ] (value_out) at (4.0, 6) {$V(s)$};
    
    \node[box] (bufferB) at (-3.5, 4) {Affine Policy for $\mathcal{B}$};
    \node[box] (bufferBJ) at (0, 4) {Affine Policy for $\mathcal{B}_J$};
    \node[output] (actor) at (3.5, 4) {MLP Policy};
    

    
    \draw[arrow] (state.south) -- ++(0, -.5) -| (select.north);
    \draw[arrow] (state.south) -- ++(0, -.5) -| (critic.north);
    
    \draw[arrow] (select.south) -- ++(0, -0.5) -| (bufferB.north);
    \draw[arrow] (select.south) -- ++(0, -0.5) -| (bufferBJ.north);
    \draw[arrow] (select.south) -- ++(0, -0.5) -| (actor.north);
    
    
    \draw[arrow] (critic.east) -- (value_out.west);

\end{tikzpicture}
\caption{Actor-Critic Network Architecture (Modified Flow)}
\label{fig:actor_critic_mod}
\end{figure}
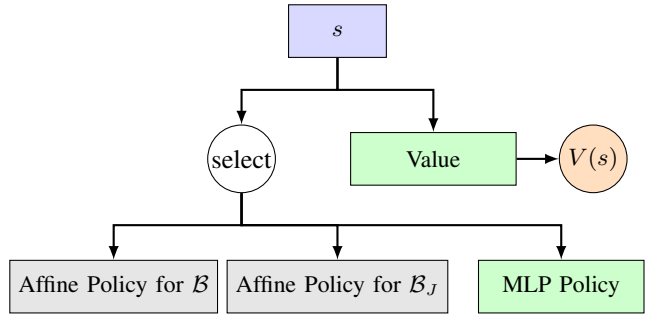

\section{NUMERICAL SIMULATIONS}\label{numerical_simulations}
In this section, we evaluate our approach on two representative hybrid dynamical systems. In our constrained pendulum scenario, a pin reset map causes an instantaneous change in the pendulum's effective length, whereas in the paddle juggler scenario, a collision with the paddle causes an instantaneous change in the projectile's direction and velocity. We compare our approach with the soft constraint method Constrained Policy Optimization (CPO) \cite{CPO} and the learned control barrier function approach PPO-Barrier \cite{PPO-Barrier}. 

\subsection{Constrained Pendulum} 
\begin{figure}[!htb]
    \vspace{-5mm}
    \centering
    \begin{tikzpicture} [scale = 0.5]
        \filldraw[black] (2,1) circle (6pt);
        \filldraw[black] (0,5) circle (2pt);
        \filldraw[black] (-2.0,2) circle (3pt);
        \draw[ultra thick] (2, 1) -- (0, 5);
        \draw[thin, dashed] (0, 0) -- (0, 5);
        \draw[thin, dashed] (0, 5) -- (-2.0,2);
        \node at (1.4, 1) {$m$};
        \node at (-2.7, 2) {$pin$};
        \node at (-1.6, 3.5) {$l_p$};
        \node at (1.5, 3) {$l$};
        \node at (-0.5, 3.0) {$\phi_p$};
        \node at (0.5, 2.0) {$\phi$};
        \draw[thin, ->] (0,3.5) .. controls (-0.5,3.4) .. (-1.0,3.5);
        \draw[thin, ->] (0,2.5) .. controls (0.5,2.3) .. (1.2,2.6);
        \draw[thin, ->] (2.9, 1) -- (2.3, 1);
        \node at (2.5,1.3) {$u$};
    \end{tikzpicture}
    \begin{tikzpicture} [scale = 0.5]
        \filldraw[black] (-3.2,1.2) circle (6pt);
        \filldraw[black] (0,5) circle (2pt);
        \filldraw[black] (-2.0,2) circle (3pt);
        \draw[ultra thick] (-2, 2) -- (0, 5);
        \draw[ultra thick] (-3.2,1.2) -- (-2, 2);
        \draw[thin, dashed] (0, 0) -- (0, 5);
        \draw[thin, dashed] (0, 5) -- (-2.0,2);
        \draw[thin, dashed] (-2, 2) -- (-2.0,0);
        \node at (-3.2, 0.7) {$m$};
        \node at (-1.2, 2) {$pin$};
        \node at (-1.5, 3.5) {$l_p$};
        \node at (-3.3, 2.0) {$l_s$};
        \node at (-0.5, 3.0) {$\phi_p$};
        \node at (-2.5, 1.2) {$\phi$};
        \draw[thin, ->] (0,3.5) .. controls (-0.5,3.4) .. (-1.0,3.5);
        \draw[thin, ->] (-2.0,1.5) .. controls (-2.3,1.3) .. (-2.5,1.6);
        \draw[thin, ->] (-4.1, 1.2) -- (-3.5, 1.2);
        \node at (-3.9, 0.9) {$u$};
    \end{tikzpicture}
    \caption{Constrained Pendulum System: (Left) Mode $q_1$, (Right) Mode $q_2$}
    \label{fig: constrained_pendulum_diagram}
\end{figure}
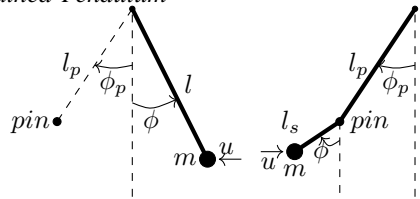

For our first scenario, we consider a constrained pendulum of length $l$ that hits a pin during its motion. The pin instantaneously alters the effective length of the pendulum to $l_s$, introducing discrete events, as illustrated in Fig.~\ref{fig: constrained_pendulum_diagram}. Due to the conservation of angular momentum, this event produces a sudden jump in the pendulum's angular velocity. Consequently, this hybrid system exhibits two distinct modes $q_1$ and $q_2$, as illustrated in Fig.~\ref{fig: constrained_pendulum_diagram}. We define the continuous state vector $s = [\phi, \dot{\phi}]^\top$, where $\phi$ denotes the pendulum's angle with the vertical and $\dot{\phi}$ its angular velocity. When the pendulum reaches the pin, which is located at an angle with the vertical $\phi_p = -\pi/12$, the pendulum's angular velocity changes instantaneously by a factor of $\eta = l/l_s =3.33$, the ratio of the length of the pendulum in $q_1$ to $q_2$.

We also assume that the controller has no access to the underlying dynamics. The pendulum starts in mode $q_1$. The goal of the controller $u$ which is acting on the pendulum's bob, is to reach angle $-\frac{\pi}{2}$, with the angular velocity constraint $\dot{\phi} < \dot{\phi}_{max}= -5$ rad/s when the state $s$ is in mode $q_2$. This condition indicates that the pendulum string is not wrapped around the pin during oscillation.

We formally define our constraint as $y(t) := \dot{\phi} \leq \dot{\phi}_{max}$, in mode $q_2$. If the angular velocity is greater than  $\frac{\dot{\phi}_{max}}{\eta}$, then after reset, the constraint $y$ will be violated. So we define constraint $y_J(t)$ such that  $y_J(t):= \dot{\phi} \leq \frac{\dot{\phi}_{max}}{\eta}= -1.5$ rad/s for states before reset in mode $q_1$. This constraint $y_J(t)$ has relative degree 1 with respect to the control input $u$.

We use a switched actor network with 2 affine actors to satisfy $y(t)$ and $y_J(t)$, and a 3-layer MLP actor to achieve the task of reaching $-\frac{\pi}{2}$. We train our policy using TD3 \cite{TD3} as the base RL algorithm for about 1000 epochs, followed by training the affine actor for buffers $\mathcal{B}$ and $\mathcal{B}_J$ by resetting the state in buffers until the condition given by Theorem~\eqref{thm:theorem_policed} is met.

Fig.~\ref{fig:cp_results} compares our policy with a base TD3 policy. The red line denotes our velocity constraint $y(t)$ ($\dot \phi >-5$ rad/s), and the orange line denotes the jump constraint $y_J(t)$ ($\dot \phi>-1.5$ rad/s) with $\eta=3.33$. The base TD3 policy violates constraints both when the state is close to the boundary or during velocity jumps, and due to a velocity jump when the pre-reset velocity is less than $-1.5$ rad/s. In contrast, our policy remains safe throughout and exhibits repulsive behavior within the buffer regions while still reaching the target angle of $-\frac{\pi}{2}$.

\begin{figure}[!htb]
    \centering
    \begin{tabular}{@{}c@{\hspace{2pt}}c@{}}
        \centering
        (a) & \includegraphics[width=0.4\textwidth]{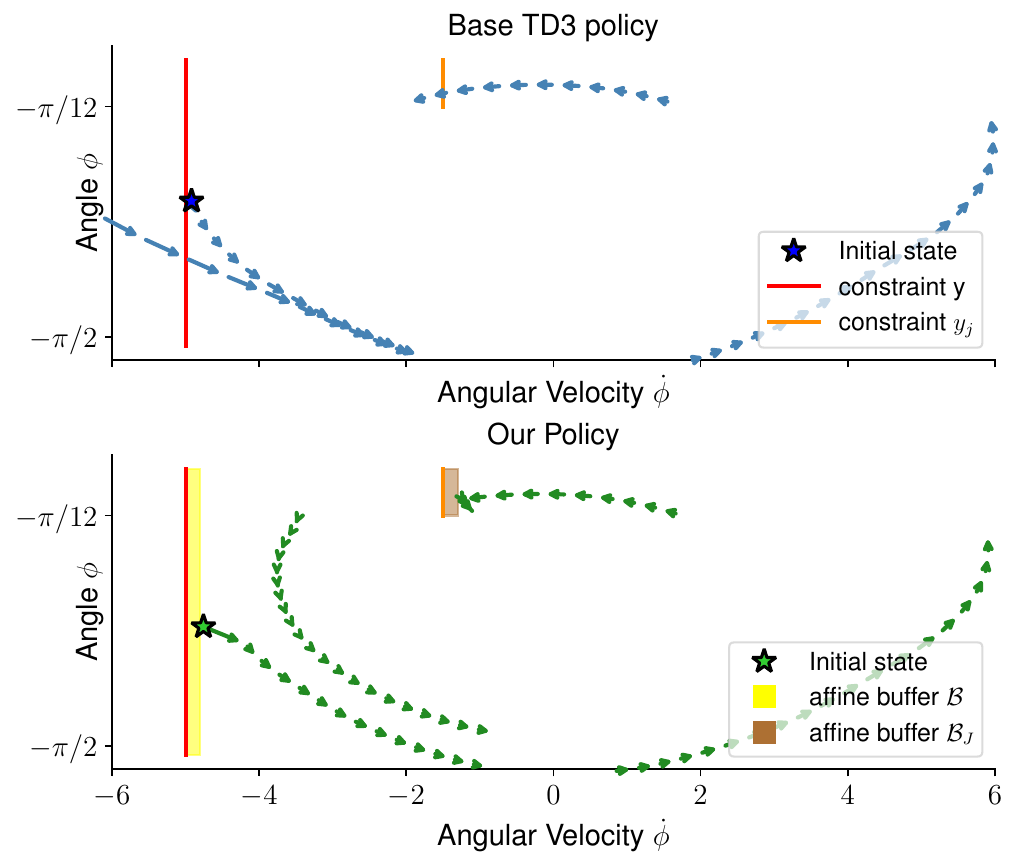}\\
        (b) & \includegraphics[width=0.4\textwidth]{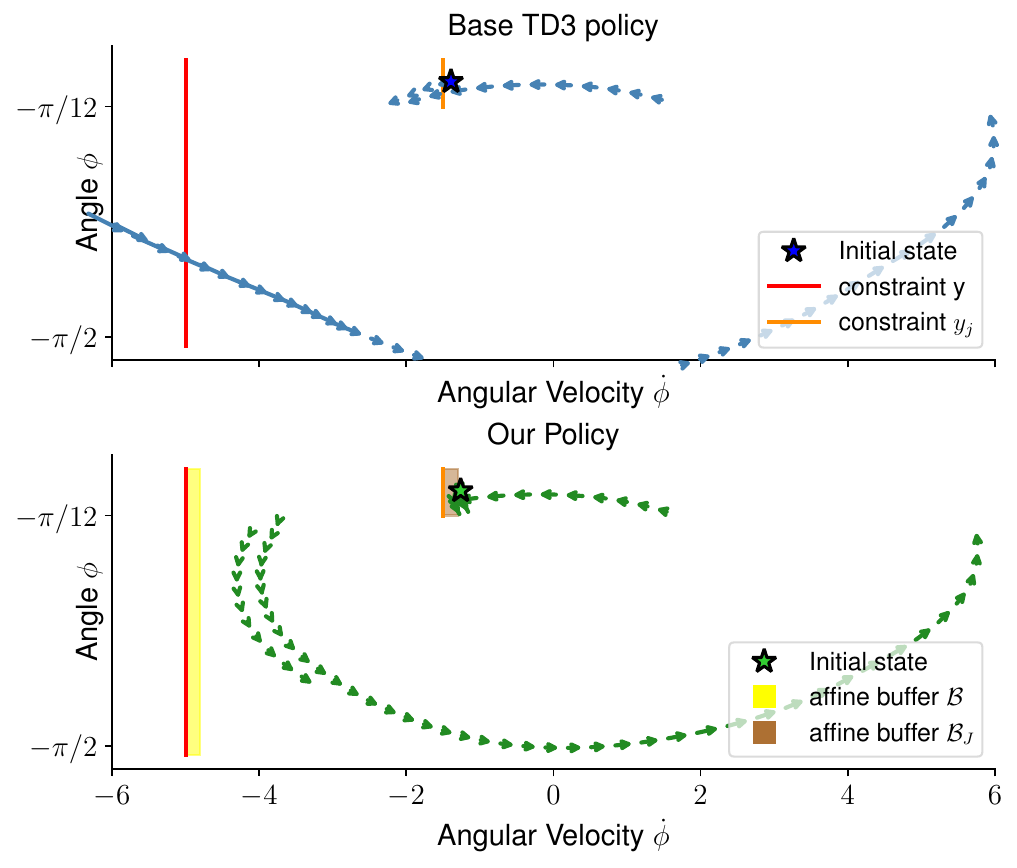}
    \end{tabular}
    
    \caption{Phase portrait of the constrained pendulum showing angle $\phi(t)$ vs. angular velocity $\dot{\phi}(t)$ under TD3 and our policies. \textcolor{myRed}{Red} and \textcolor{myOrange}{Orange} lines denote the constraints $y(t)$ and $y_J(t)$, respectively. Yellow and Brown regions denote buffers $\mathcal{B}$ and $\mathcal{B}_J$ preceding constraints $y(t)$ and $y_J(t)$, respectively. (a) Trajectories starting near $y(t)$ and (b) Trajectories starting near $y_J(t)$, unlike TD3, remain safe under our policy.}
    \label{fig:cp_results}
\end{figure}

\subsection{Paddle Juggler}
For our second scenario, we consider a one-dimensional paddle juggler system in which our goal is to juggle a ball to its maximum apex height. This system exemplifies hybrid dynamics as energy transfer occurs only at the instantaneous moment that the paddle and ball are in contact. The continuous dynamics are given by $\ddot{x}_b = -g, \ddot{x}_p = u,$ where $g$ is the gravity and $u$ is the force acting on the paddle at all times. Thus, the hybrid system has a single mode $q$ with continuous state, $x = [x_b, \dot{x}_b, x_p, \dot{x}_p]^\top$ containing the ball's and paddle's vertical positions and velocities. We assume the paddle's mass is very large, so the impact has no effect on its dynamics. When the position of the ball $x_b$ reaches the paddle position $x_p$, $\dot {x}_b$ changes instantaneously by a factor of the restitution coefficient  $e \in [0,1]$. 

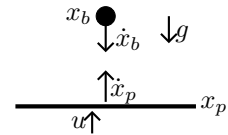
\begin{figure}[!htb]
    \tikzset{
        ctrl/.style ={-{Straight Barb[length=3pt]}, very thick},
    }
    \centering
    \begin{tikzpicture}[scale = 0.6]
        \filldraw[black] (0,2) circle (6pt);
        \node at (-0.6, 2) {$x_b$};
        \draw[thick, -{Straight Barb[length=3pt]}] (0, 2) -- (0, 1.2);
        \node at (0.5, 1.5) {$\dot x_b$};
        \draw[ultra thick] (-2, 0) -- (2, 0);
        \node at (2.4, 0) {$x_p$};
        \draw[thick, -{Straight Barb[length=3pt]}] (0, 0.1) -- (0, 0.8);
        \node at (0.4, 0.35) {$\dot x_p$};
        \draw[thick, -{Straight Barb[length=3pt]} ] (1.4, 2) -- (1.4, 1.4);
        \node at (1.7, 1.6) {$g$};
        \draw[thick,-{Straight Barb[length=3pt]}] (-0.3, -0.6) -- (-0.3, -0.1);
        \node at (-0.6, -0.35) {$u$};
    \end{tikzpicture}
    \caption{Paddle Juggler System}
    \label{fig: ball_paddle_system}
    \vspace{-8pt}
\end{figure}

To properly simulate our system and design our constraint, we apply a coordinate system transformation where $s[0]$ and $s[1]$ denote the relative distance and velocity between the paddle and ball, while $s[2]$ and $s[3]$ denote the paddle's position and velocity. The dynamics still remain unknown from the controller's perspective. The control objective is to maximize the ball's apex height while satisfying the relative velocity constraint $s[1] = \dot{x}_b - \dot{x}_p  < \dot{s}_{max}$, which ensures the system does not become unstable due to high momentum.

\begin{figure}
    \centerline{
        \includegraphics[width= 0.5\textwidth]{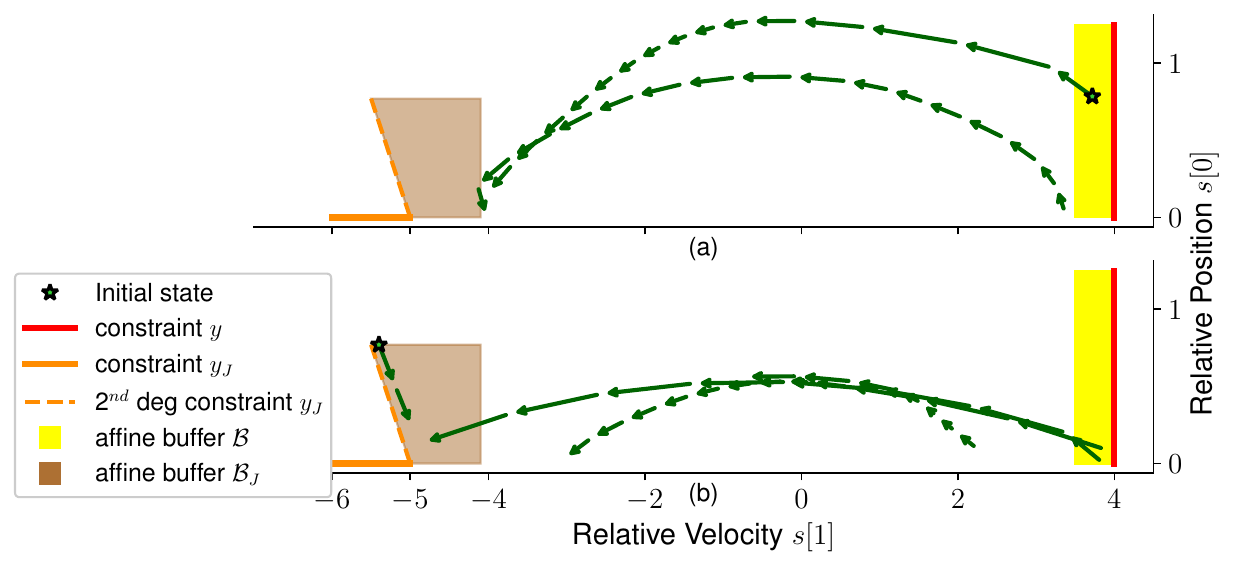}}
        \caption{Phase portrait of relative position $s[0](t)$ and relative velocity $s[1](t)$ for the paddle juggler under our policy. \textcolor{myRed}{Red} and \textcolor{myOrange}{Orange} lines denote the constraints $y(t)$ and $y_J(t)$, respectively. Yellow and Brown regions denote the buffers $\mathcal{B}$ and $\mathcal{B}_J$, respectively. (a) Trajectory starting inside $\mathcal{B}$ and (b) Trajectory starting inside $\mathcal{B}_J$ (a) Trajectories starting near $y(t)$ and (b) Trajectories starting near $y_J(t)$, unlike TD3, remain safe under our policy.}
    \label{fig:pj_results}
\end{figure}

We formally define the constraint as $y(t) := s[1] \leq \dot{s}_{max}$. Thus, we want to prevent state resets when the relative velocity is larger than $-\dot s_{max}/e$. The impact constraint $y_J(t)$ has relative degree two with respect to the control input $u$.

Alternatively, we could use a more restrictive proxy first-order constraint $\dot{s} \geq -\dot{s}_{\max}/e$, which simplifies learning. In this work, we employ the high-relative-degree formulation to illustrate the capability of our method. The network architecture and training pipeline are the same as in the constrained pendulum. The only difference is in training the affine policy for buffer $\mathcal{B}_J$, where we try to satisfy the condition of Theorem~\eqref{thm:theorem_reldeg2} by randomly resetting the initial state within the buffer.

Fig.~\ref{fig:pj_results}(a) shows that even when trajectories start close to the relative velocity constraint $\dot{s}_{max} = 4$, the affine policy in buffer $\mathcal{B}$ drives them away from the constraint $y$. If the ball before the reset has a relative velocity higher than $-5$ with $e=0.8$, the post-reset relative velocity will violate our constraint $y$. We therefore design a dissipative affine buffer $\mathcal{B}_J$ that pushes trajectories away from $y_J$, as shown in Fig.~\ref{fig:pj_results}(b).

Table~\ref{tab:tab_results} shows the comparison of Average Constraint Satisfaction Percentage (ACS\%), which is the percentage of episodes where the constraint is never violated, irrespective of task completion, to evaluate the safety capabilities of the policy. Completion without Constraint Violation Percentage (CCV\%) represents the percentage of episodes where the policy achieves the target without violating the constraint across 100 rollouts (50 rollouts initialized near the constraint boundary and close to reset, and 50 rollouts initialized far from the constraint). Our algorithm is the only one to guarantee constraint satisfaction, whereas the CPO \cite{CPO} and PPO-Barrier \cite{PPO-Barrier} baselines, along with TD3 \cite{TD3}, cannot achieve this task without violating constraints in both environments. 

\begin{table}[!htb] 
    \begin{center}
        \footnotesize
        \setlength{\tabcolsep}{2pt}
        \renewcommand{\arraystretch}{1.05} 
        \begin{tabular}{c >{\centering\arraybackslash}b{1.4cm} >{\centering\arraybackslash}b{1.4cm} >{\centering\arraybackslash}b{1.4cm} >{\centering\arraybackslash}b{1.4cm}}
            \noalign{\hrule height 1.2pt} 
            &\multicolumn{2}{c}{\textbf{Constrained Pendulum}} 
            & \multicolumn{2}{c}{\textbf{Paddle-Juggler}} \\
            \cmidrule(lr){2-3}
            \cmidrule(lr){4-5}
            \textbf{Algorithm}
              & \textbf{ACS \% $\uparrow$} 
              & \textbf{ CCV \% $\uparrow$}
              & \textbf{ACS \% $\uparrow$} 
             & \textbf{ CCV \% $\uparrow$} \\
            \hline
            TD3& 99 & 80& 99 & 50 \\
            CPO& 94 & 18 & 90 & 45 \\
            PPO Barrier& 99& 91 & 99 & 49\\
             \hline
            \textbf{Ours} & \textbf{100} & \textbf{100} & \textbf{100} & \textbf{100} \\
            \noalign{\hrule height 1.2pt} 
        \end{tabular}
    \end{center}
    \caption{Comparison of the Average Constraint Satisfaction(ACS) \% and the Completion without Constraint Violation(CCV) \% of our policy with baselines: TD3~\cite{TD3}, CPO~\cite{CPO}, and PPO-Barrier~\cite{PPO-Barrier} for 100 rollouts (50 rollouts initialized near the constraint boundary and close to reset, and 50 rollouts initialized far from the constraint).}
    \label{tab:tab_results}
\end{table}

\section{CONCLUSIONS AND FUTURE WORKS} \label{conclusion}
 We proposed a framework that guarantees the satisfaction of safety constraints in closed loop for black-box hybrid dynamical systems. Our key insight is to design affine repulsive buffers in our policy networks around safety constraints, ensuring that trajectories never violate the constraints, even when involving instantaneous state jumps. We demonstrate the effectiveness of our approach on complex hybrid systems like the constrained pendulum and paddle juggler, where our method achieves 100\% constraint satisfaction, compared to both soft penalty and learned-CBF baselines. In future work, we aim to extend this framework to higher-dimensional systems (e.g., humanoid locomotion) where strong nonlinearities make finding locally affine policies challenging.

\newpage
\bibliographystyle{IEEEtran}
\bibliography{references.bib}

\clearpage
\section{Appendix}\label{supplemental}
\subsection{Buffer $\mathcal{B}$ is a convex polytope}
\begin{lemma}
Buffer $\mathcal{B}$ is a polytope.\cite{POLICEd_RL}
\end{lemma}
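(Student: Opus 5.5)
The plan is to write $\mathcal{B}$ as an intersection of finitely many closed half-spaces and then check that this intersection is bounded. By definition~\eqref{eq: buffer},
\[
\mathcal{B} = S_i \cap \{s \in \mathbb{R}^n : Cs \le d\} \cap \{s \in \mathbb{R}^n : -Cs \le -(d-w)\}.
\]
The last two sets are closed half-spaces, since $C \in \mathbb{R}^{1\times n}$ and $d, w$ are scalars. The slab $\{s : d-w \le Cs \le d\}$ is therefore a polyhedron, given by two linear inequalities. It is also convex, because each half-space is convex and intersections of convex sets are convex.

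The second step handles the factor $S_i$. Following \cite{POLICEd_RL}, I take the state space of the mode to be a polytope, or at least a box given by bounds on each coordinate (state limits): $S_i = \{s : F_i s \le g_i\}$ for some $F_i \in \mathbb{R}^{k\times n}$ and $g_i \in \mathbb{R}^k$. Then
\[
\mathcal{B} = \Bigl\{ s : \begin{bmatrix} F_i \\ C \\ -C \end{bmatrix} s \le \begin{bmatrix} g_i \\ d \\ w-d \end{bmatrix} \Bigr\},
\]
which is an H-representation: a finite system of linear inequalities defining a convex polyhedron. Since $\mathcal{B} \subseteq S_i$ and $S_i$ is bounded, $\mathcal{B}$ is bounded too. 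A bounded polyhedron is a polytope, by the Minkowski--Weyl theorem. Consequently $\mathcal{B}$ equals the convex hull of its finitely many vertices $\mathcal{V}(\mathcal{B})$, which is exactly the property used in the proof of Theorem~\ref{thm:theorem_policed}.

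I expect the main obstacle to be this second step. The paper only states $S_i \subset S \subset \mathbb{R}^n$ and never explicitly says that $S_i$ is a bounded polyhedron. Without that assumption the slab $\{d-w \le Cs \le d\}$ alone is unbounded whenever $n \ge 2$, and it is then not a polytope. My first approach is to state explicitly the standing assumption from \cite{POLICEd_RL}, namely that the admissible states form a compact polytope (physical state bounds). If one prefers a weaker hypothesis, it is enough to replace $S_i$ by any bounding box containing the reachable states; the same argument then goes through unchanged.

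Finally I would check that $\mathcal{B}$ is nonempty, for instance whenever the hyperplane $\{Cs = d\}$ meets $S_i$, so that the vertex set is well defined.
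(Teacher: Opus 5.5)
Your proof is correct and follows essentially the paper's route. The paper also writes $\mathcal{B}$ as the intersection of the slab $\{s : Cs \in [d-w,d]\}$ with $S_i$, and it likewise treats $S_i$ as a polytope without stating this as an assumption, which you rightly make explicit. The only difference is the final citation: the paper invokes Gr\"unbaum's Theorem~3.1.4, whereas you use the H-representation together with Minkowski--Weyl for bounded polyhedra.
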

\begin{proof}
We can write buffer $\mathcal{B}$ as $\mathcal{B} = C'([d-r,d]) \cap S_i$ where $C'([d-r,d])$ denotes the inverse image of the interval $[d-r,d]$, meaning $C'([d-r,d]) := \{s : Cs \in [d-r,d]\}$. Note that the inverse image of an interval always exists, even if the matrix $C$ is not invertible. Therefore, $\mathcal{B}$ is the intersection of the affine variety $C'([d-r,d])$ and the polytope $S_i$, and hence $\mathcal{B}$ is a polytope according to Theorem 3.1.4 of \cite{Grunbaum2003ConvexPolytopes}. 
\end{proof}

\subsection{Proof of Theorem~\ref{thm:theorem_reldeg2}}
\begin{proof}
Since our policy is affine in $\mathcal{B}_J$, we can denote it as $\pi_\theta(s) = D_\theta\ s + e_\theta$ for any state $s$ in $\mathcal{B}_J$. Additionally, $\epsilon_j$ is an approximation measure and there exists $A \in \mathbb{R}^{n \times n}$, $B \in \mathbb{R}^{n \times m}$ and $c \in \mathbb{R}^{n}$ which defines the affine approximation of the second derivative of dynamics $f_j$ in buffer $\mathcal{B}_J$. The following condition holds for $s = v \in \mathcal{V}(\mathcal{B}_J)$:
\begin{equation}
     \begin{aligned}
     \Tilde{C}(Av + B\pi_{\theta}(v) + c)   & \leq |\Tilde{C}(Av + B\pi_{\theta}(v) + c) \\ 
                                            &\quad -\Tilde{C}\ddot{f}_j(v,\pi_{\theta}(v))| \\
                                            &\quad + \Tilde{C}\ddot{f}_j(v,\pi_{\theta}(v)) \\
     & \leq -\varepsilon_j - \beta \dot{v} .
     \end{aligned}
\end{equation}
where the first inequality is directly from the triangular inequality and characteristics of the norm, the second follows from our definition of $\epsilon_j$ and our assumed condition~\eqref{eq: repulsive_reldeg2}. Using the convexity of polytope $\mathcal{B}_J$ of vertices $\mathcal{V}(\mathcal{B}_J) = \{v^1, ..., v^N\}$, for any $s \in \mathcal{B}_J$, there exists $\alpha_1, \alpha_2, ..., \alpha_N \in \mathbb{R}^+$ such that $\sum\limits_{k=1}^{N}\alpha_k = 1$ and $s = \sum\limits_{k=1}^{N}\alpha_kv^k$ where $N$ is the number of vertices of $\mathcal{B}_J$. Policy ~\eqref{eq: expected reward} applied at any $s \in \mathcal{B}_J$ thus yields

\begin{align*}
    \Tilde{C}(As + B\pi_{\theta}(s) + c)  &= \Tilde{C}(As + B(D_{\theta}\ s + e_{\theta}) + c) \\
    &= \Tilde{C}((A+BD_{\theta})s + Be_{\theta} + c) \\
    &= \Tilde{C}\left ((A+BD_{\theta})\sum\limits_{k=1}^{N}\alpha_kv^k \right) \\
    &\quad + \Tilde{C}(Be_{\theta} + c )\sum\limits_{k=1}^{N}\alpha_k \\
    &= \sum\limits_{k=1}^{N}\Tilde{C} \left ((A+BD_{\theta})v^k + Be_{\theta}+ c \right )\alpha_k \\
    &= \sum\limits_{k=1}^{N}\alpha_k\Tilde{C}\left (Av_k + B(D_{\theta}v^k + e_{\theta})+ c\right ) \\
    &= \sum\limits_{k=1}^{N}\alpha_k\Tilde{C}\left (Av_k + B\pi_{\theta}(v^k)+ c\right ) \\
    &\leq \sum\limits_{k=1}^{N}\alpha_k(-\varepsilon_j) -   \beta  \sum\limits_{k=1}^{N}\alpha_k \dot{v}^k \\
    &\leq -\varepsilon_j - \beta \dot{s}. \numberthis \label{eq: convex_BJ_appendix}
\end{align*}

And similarly to earlier, we note that
\begin{align*}
    \ddot{y}_J &= \Tilde{C}\ddot{f}_j(s,\pi_{\theta}(s)) \\
    & \leq |\Tilde{C}\ddot{f}_j(s,\pi_{\theta}(s)) - \Tilde{C}(As + B\pi_{\theta}(s) + c)| \\
    &\quad + \Tilde{C}(As + B\pi_{\theta}(s) + c) \\
    & \leq \varepsilon_j - \varepsilon_j - \beta \dot{s} \\
    & \leq - \beta \dot{y}_J. \numberthis \label{eq: Cs_dot_neg_2_appendix}
\end{align*}
where we first use the triangular inequality, then affine approximation measure $\varepsilon_j$, and \eqref{eq: convex_BJ_appendix}. Having proved \eqref{eq: Cs_dot_neg_2_appendix}, we will show that it prevents all trajectories $\mathcal{\tau}^S(s_0,\pi_{\theta})$ from entering unsafe jump set $J_{ji}$ when $s_0 \in S_s$ where $S_s$ is a safe set. Using equation \eqref{eq: Cs_dot_neg_2_appendix}, we can state that the function $y_J^{k}$ for $k \in \{1, 2\}$ are always decreasing can be given by
\begin{equation}
    y_J^{(k)}(t)\leq y_J^{(k)}(t_0)e^{-\beta(t - t_0)} < -\beta y_J^{(k - 1)}(t_0)e^{-\beta(t - t_0)},
\end{equation}
where $t_0$ is the time the trajectory enters the buffer $\mathcal{B}_J$, the $y_J^{(k)}$ decreases with time and hence, the trajectory reaches $y_J= \Tilde{d}$ at time $T$ with $\dot y(T) \leq 0$. Thus, the trajectories cannot leave the buffer $\mathcal{B}_J$ through the constraint and thus it is provable impossible for the trajectory to reach the constraint-violating jump set $J_{ji}$ (which after a state reset would have violated the constraint $y(t)$).
\end{proof}

\subsection{Buffer Design for Relative Degree 2 Buffer}
The purpose of the buffer is to provide a region of the state space in which the controller dissipates the generalized inertia of trajectories to prevent violation of the constraint 
\begin{equation} \label{eq:proof_constaint_j}
y_J(t) = \Tilde{C}s(t) \leq \Tilde{d}.
\end{equation}



To dissipate inertia and guarantee that trajectories reaching the boundary $y_J=\Tilde{d}$ cannot continue towards it, we allow positive values of $\dot y_J$ in the interior but require that $\dot y_J \le 0$ when $y_J=\Tilde{d}$.  Requiring $\dot y_J \leq 0$ for all $s \in \mathcal{B}_J$ is not feasible, since it does not consider the states we need to slow down before the constraint boundary $y_J=\Tilde{d}$. Therefore, we allow $\dot{y}_{Jmax} > 0$ to be the maximal velocity which can be dissipated by the buffer $\mathcal{B}_J$. To ensure this property while maintaining convexity, we impose an affine upper bound on $\dot y_J$:
\begin{equation} \label{eq:inequality_appendix}
    \dot y_J \le \beta (\Tilde{d} - y_J), \quad\text{with} \quad \beta := \frac{\dot y_{Jmax}}{\Tilde{d}-y_{Jmin}} 
\end{equation}
and $\dot y_{Jmax}>0$ is chosen so that states at $y_J=y_{Jmin}$ may have maximal velocity $\dot y_{Jmax}$ and $x_{2max} = 0$. We choose $y_{Jmin} < 0$ so that $y_{Jmin} < x_{Jmax}$ for all $y_J$. To enforce the inequality ~\eqref{eq:inequality_appendix}, we need to actuate the $\ddot{y}_J$ since the relative degree between $y_J$ and $u$ is $2$. Differentiating ~\eqref{eq:inequality_appendix} with time we get
\begin{equation}
\ddot y_J \le -\beta \dot{y}_J
\end{equation}

Thus, we choose buffer $\mathcal{B}_J$'s lower bound $\underline{b}$ and upper bound $\bar{b}$ to be as follows:

\begin{align*}
\underline b = \left [ y_{Jmin},  \quad \dot{y}_{Jmin} \right ] \\
  \bar b = \left [ y_{Jmax},  \quad \beta(y_{Jmax}-s_1) \right ]
\end{align*}

where $s_1$ denotes the first component of state $s \in \mathcal{B}_J$. The remaining coordinates of states in state space do not influence constraint satisfaction. For these, we select a compact polytope $\mathcal{P} \subset\mathbb{R}^{n-2}$.

The buffer region is then defined as
\begin{equation}
    \mathcal{B}_J := \left\{\, s\in S_j :\; s_{1:2}\in [\,\underline b,\; \bar b(s)\,],\;\; s_{3:n}\in \mathcal{P} \right\}.
\end{equation}
Note that by design $\mathcal{B}_J$ is a convex polytope. 

\subsection{Numerical Simulations Details}
\subsubsection{Constrained Pendulum}

We define the continuous state vector as $s = [\phi, \dot{\phi}]^\top$, where $\phi$ denotes the pendulum's angle with the vertical line and $\dot{\phi}$ its angular velocity. The dynamics governing continuous state transitions within mode $q_1$ are given by:
\begin{equation} \label{eq:pend_dynamics}
\begin{aligned}
  \ddot{\phi} =  - \frac{g}{l}sin(\phi) -\frac{z}{m}\dot{\phi} + u,\\
\end{aligned}
\end{equation}
where $g$ is the gravity coefficient, $m$ is the mass of the pendulum, $z$ is the damping factor and $u$ is the force acting at the pendulum's bob. When the pendulum reaches the pin, which is located at an angle with the vertical $\phi_p = \pi/12$, the pendulum's angular velocity changes due to instantaneous impact. We can model the discrete dynamics that lead to a switch from mode $q_1$ to mode $q_2$ as follows:
\begin{equation}\label{eq:pend_resetmap}
\begin{aligned}
G_{12} = \phi - \phi_p \leq 0, \ \ \ R_{12}\Bigl(
\begin{bmatrix}
  \phi \\  \dot{\phi}
\end{bmatrix}
\Bigl) = \begin{bmatrix}
  \phi_p \\ \eta\dot{\phi} 
\end{bmatrix},
\end{aligned}
\end{equation}
where $\eta = l/l_s$ is the scaling of velocity due to the change in length of the pendulum and $l_s = l/3$ is the length of the pendulum with respect to the pin, i.e., $l_s = l - l_p$, and $l_p$ is the position of the pin with respect to the pendulum's fixed point. Since $l_s$ is smaller than $l$, $\eta = 3.33 $. The dynamics of mode $q_2$ are the same as \eqref{eq:pend_dynamics} except that the length of the pendulum is $l_s$ instead of $l$.
The discrete dynamics which leads to switch from mode $q_2$ to mode $q_1$  is $G_{21}= \phi - \phi_p  \geq 0 $ and the reset map is $R_{21}(s) = [ \phi_p, \frac{\dot{\phi}}{\eta} ]^\top$.

We assume that dynamics \eqref{eq:pend_dynamics} and \eqref{eq:pend_resetmap} are a black-box model from the controller's perspective, i.e., we do not provide the dynamics equations to the controller. The pendulum starts at an initial position $\phi(0) = \phi_0$, with zero angular velocity $\dot{\phi}(0) = 0$ in mode $q_1$. Our goal is to reach angle $-\frac{\pi}{2}$, while we require the angular velocity to be at most $\dot{\phi}_{max}$ when the state $s$ is in mode $q_2$ in order not to wrap the pendulum string around the pin while oscillating. Training a safe RL policy to satisfy this constraint is challenging since the policy needs to account for the jump in angular velocity due to the reset map $R_{12}$. Our framework can provably enforce this constraint.

We first define the constraint as $y(t) = Cs(t) \leq d \quad \forall t$ where $C=[0,\ 1]$ and $d=[0,\ -\dot{\phi}_{max}]^\top$ and $\dot{\phi}_{max} = 5$ rad/s. We also need to define another constraint for the constraint violating jump set $J_{12}$ = $\{s \in S : ((q_1,s),(q_2,s')) \in T; \ CR_{12}(s) \leq d$. Since our reset map $R_{12}(s)$ is affine we can define the constraint $y_J(t) = \Tilde{C}s(t) \leq \Tilde{d}$ for all states in $J_{12}$ where $\Tilde{C} = [0,\ \eta]$ and $\Tilde{d}= [0,\ -\dot{\phi}_{max}]^\top$.

To enforce these constraints using our framework, we define buffer regions $\mathcal{B}$ and $\mathcal{B}_J$ preceding the constraints $y(t)$ and $y_J(t)$, respectively, where the policy is affine and repulsive. We choose the buffers as $\mathcal{B} := [-\pi, 0] \times [-\dot \phi_{max},\dot \phi_{\mathcal{B}}]$. Buffer for $J_{12}$ is $\mathcal{B}_J := [\phi_p, 0] \times [-\dot \phi_{max}/\eta, \dot \phi_{\mathcal{B}_J}]$. We choose $\dot \phi_{\mathcal{B}}$ and $\dot \phi_{\mathcal{B}_J}$ such that the buffers are wide enough so that they cannot be "jumped" over by the system in a single time-step.
From \eqref{eq:pend_dynamics} and \eqref{eq:pend_resetmap}, we can infer that the constraint $y_J(t)$ has relative degree 1 with respect to the control input $u$. Therefore, we can use condition of Theorem \ref{thm:theorem_policed} with buffer $\mathcal{B}_J$ to provably satisfy the constraint $y_J(t)$, i.e., to guarantee $\eta\dot{\phi} \leq -\dot \phi_{max}$. 


\subsubsection{Paddle Juggler}

The one-dimensional paddle juggler system is the hybrid system with a single discrete mode $q$ with continuous state, $x = [x_b, \dot{x}_b, x_p, \dot{x}_p]^\top$ containing the ball's and paddle's vertical positions and velocities. The guard condition is $G = x_b - x_p \leq 0$, and the reset map is $R(s) = [x_b,\ (1+e)\dot{x}_p -e\dot{x}_b, \ x_p, \ \dot{x}_p]^\top$ where $e \in [0,1]$ is the coefficient of restitution. 

For designing constraints that are agnostic to the position of the ball and the paddle, we apply a coordinate system transformation to consider the relative distance between the paddle and the ball. We define the new states as $s[0] := x_b - x_p$, $s[1] := \dot{x}_b - \dot{x}_p$, $s[2] := x_p$ and $s[3] := \dot{x}_p$. The continuous and the discrete dynamics of the system in these relative coordinates are therefore given by:
\begin{align}\label{eq:pj_dynamics}
\dot{s}[0] = -g - u, \ \ \ \dot{s}[3] = u,
\end{align}
\begin{equation}\label{eq:pj_discrete}
\begin{aligned}
G(q,q) = s[0] \leq 0 , \ \ 
R\Bigl(
\begin{bmatrix}
  s[0] \\ s[1]  \\ s[2]  \\ s[3] 
\end{bmatrix}
\Bigl) = \begin{bmatrix}
  s[0]  \\ -es[1]   \\ s[2]  \\ s[3] 
\end{bmatrix}
\end{aligned}
\end{equation}

We assume that dynamics \eqref{eq:pj_dynamics} and \eqref{eq:pj_discrete} remain a black box from the controller's perspective, and use them only to simulate the system. The ball starts at an initial position $x_b(0) = x_0$, with zero velocity $\dot{x}_b(0) = 0$, and the paddle's initial position is $x_p(0) = 0$, with zero velocity $\dot{x}_p(0) = 0$. Our control objective is to maximize the ball's apex height, while we require the relative velocity between the ball and the paddle to be at most $\dot{s}_{max}$. This constraint ensures the system does not become unstable due to high momentum. We demonstrate that our framework can achieve the task of juggling the ball to its maximum height while never violating the safety constraint.

We first define the constraint as $y(t) := s[1] \leq \dot{s}_{max}$ with $\dot{s}_{max}= 4$. We want to prevent the states from resetting when the relative velocity is less than $-\dot s_{max}/e$ and $e=0.8$. Thus, the two constraints are $y(t) = Cs(t) \leq d  \quad \forall t$ where $C=[0,\ 1,\ 0,\ 0]$ and $d=  [0, \dot{s}_{max}, 0, 0]^\top$. The second constraint for the jump set $J(q,q)$ = $ \{s \in S : ((q,s),(q,s')) \in T; \ CR(s) \leq d\}$. Since our reset map $R(s)$ is affine, for all states in $J(q,q)$, we can define the constraint $y_J(t) = \Tilde{C}s(t) \leq \Tilde{d}$ where $\Tilde{C}= [0,\ -e,\ 0,\ 0]$ and $\Tilde{d}= [0,\ \dot{s}_{max},\ 0,\ 0]^\top$.

The jump constraint $y_J(t)$ has relative degree $2$ with respect to the control input $u$. Consequently, we apply the condition of Theorem~\ref{thm:theorem_reldeg2} to make the buffer $\mathcal{B}_J$ repulsive and guarantee that any trajectory entering the buffer dissipates its velocity before reaching the jump constraint. To design $\mathcal{B}_J$, we introduce a relative-degree-2 buffer immediately preceding the constraint $y_J$ to dissipate the system's inertia before impact. Additionally, we can employ a relative-degree-1 buffer before the relative-degree-2 buffer to facilitate the smoother dissipation of momentum. 

Accordingly, we choose the buffers as $\mathcal{B} := [0, s[0]_{max}] \times [s[1]_{\mathcal{B}}, \dot s_{max}] \times [s[2]_{min}, s[2]_{max}] \times [ s[3]_{min}, s[3]_{max}] $ and $\mathcal{B}_J := [0, s_{\mathcal{B}_J}] \times [ -s[1]_{\mathcal{B}_J}, -e\dot s_{max}] \times [s[2]_{min}, s[2]_{max}] \times [ s[3]_{min}, s[3]_{max}] $ where $s[2]_{min}$ and $s[2]_{max}$ are the minimum and maximum value of paddle position respectively and $s[3]_{min}$ and $s[3]_{max}$ are the minimum and maximum value of paddle velocity respectively. We choose $s[1]_{\mathcal{B}}$ such that the buffers wide enough such that it cannot be "jumped" over by the system in a single time-step. We choose $s_{\mathcal{B}_J}$ and $s[1]_{\mathcal{B}_J}$ such that the inertia of the system can be dissipated before the constraint. We find $s[1]_{\mathcal{B}_J}$ by simulating the maximum relative velocity that can be dissipated within buffer $\mathcal{B}_J$ given the dynamics \eqref{eq:pj_dynamics}.
The buffers $\mathcal{B}$ and $\mathcal{B}_J$ used in our experiment are ad follows:
\begin{equation}
\begin{aligned}
    \mathcal{B}:= \{ & s[0]= x_b -x_p \in [0., 5.0] \\
& s[1]= \dot{x}_b - \dot{x}_p \in [3.5, 4.0] \\
& s[2]= {x}_p \in [-1.0, 1.0] \\
& s[3]= {x}_p \in [-5.0, 5.0] \}.
\end{aligned}
\end{equation}

\begin{equation}
\begin{aligned}
    \mathcal{B}_J:= \{ & s[0]= x_b -x_p \in [0., 0.77] \\
& s[1]= \dot{x}_b - \dot{x}_p \in [-5.5, -5.0] \\
& s[2]= {x}_p \in [-1.0, 1.0] \\
& s[3]= {x}_p \in [-5.0, 5.0] \}.
\end{aligned}
\end{equation}

\end{document}